\newtheorem{theorem}{Theorem}
\newcommand{\red}{\textcolor{red}}
\begin{document}
\title{LeTac-MPC: Learning Model Predictive Control for Tactile-reactive Grasping}
\author{
Zhengtong Xu, Yu She$^{*}$
\thanks{$^{*}$Address all correspondence to this author.}
\thanks{Zhengtong Xu and Yu She are with the School of Industrial Engineering, Purdue University, West Lafayette, USA  (E-mail: \{xu1703, shey\}@purdue.edu).}
}

\let\oldtwocolumn\twocolumn
\renewcommand\twocolumn[1][]{%
    \oldtwocolumn[{#1}{
    \begin{center}
    \includegraphics[trim=100 0 0 0,clip,width=0.85\textwidth]{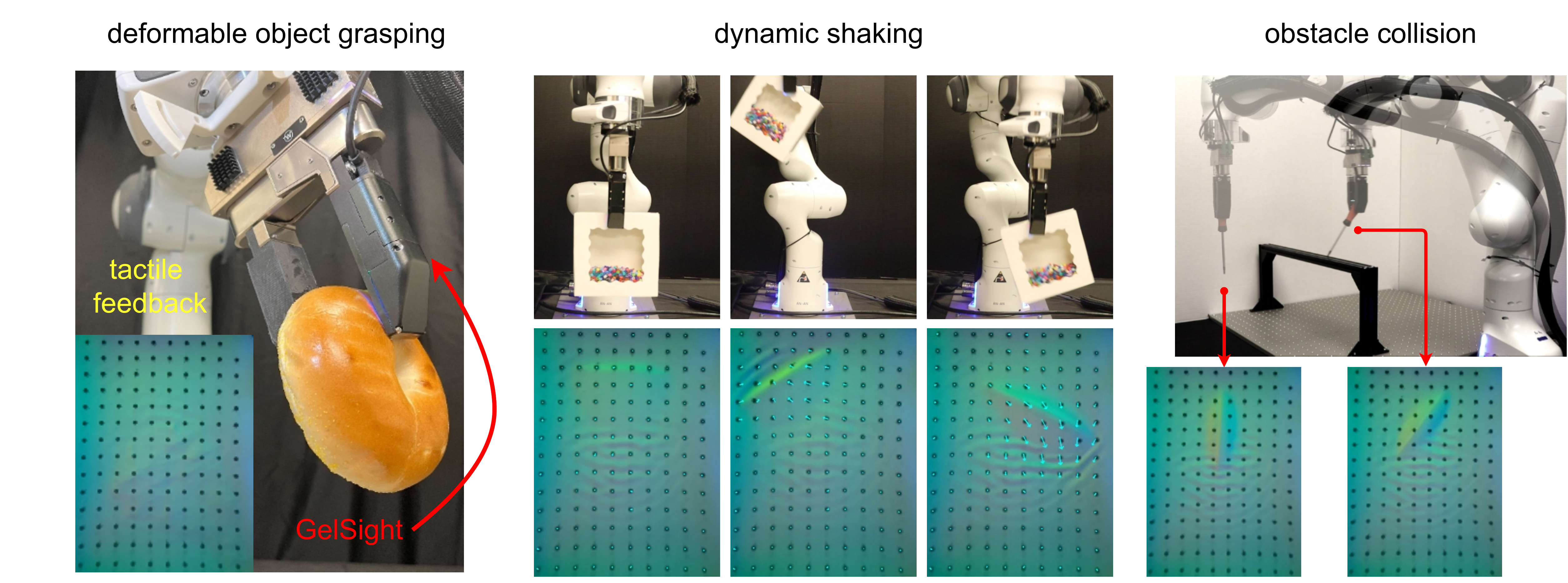}
           \captionof{figure}{LeTac-MPC is capable of reactive grasping for objects with varying physical properties, shapes, sizes, and surface textures. It enables robust grasping in dynamic manipulation scenarios, adjusting the gripper width to provide the appropriate grasping force based on tactile feedback. This ensures stability without damaging the grasped object. Our demonstrations show the robot performing various tasks using LeTac-MPC. Left: The robot grasps a deformable object. Middle and right: LeTac-MPC allows the gripper to maintain a stable grasping under dynamic shaking and unpredictable external collisions, respectively. }
           \label{fig:firstPage}
        \end{center}
    }]
}

\maketitle

\begin{abstract}
Grasping is a crucial task in robotics, necessitating tactile feedback and reactive grasping adjustments for robust grasping of objects under various conditions and with differing physical properties. In this paper, we introduce LeTac-MPC, a learning-based model predictive control (MPC) for tactile-reactive grasping. Our approach enables the gripper to grasp objects with different physical properties on dynamic and force-interactive tasks. We utilize a vision-based tactile sensor, GelSight \cite{yuan2017gelsight}, which is capable of perceiving high-resolution tactile feedback that contains information on the physical properties and states of the grasped object. LeTac-MPC incorporates a differentiable MPC layer designed to model the embeddings extracted by a neural network (NN) from tactile feedback. This design facilitates convergent and robust grasping control at a frequency of 25 Hz. We propose a fully automated data collection pipeline and collect a dataset only using standardized blocks with different physical properties. However, our trained controller can generalize to daily objects with different sizes, shapes, materials, and textures. The experimental results demonstrate the effectiveness and robustness of the proposed approach. We compare LeTac-MPC with two purely model-based tactile-reactive controllers (MPC and PD) and open-loop grasping. Our results show that LeTac-MPC has optimal performance in dynamic and force-interactive tasks and optimal generalizability. We release our code and dataset at https://github.com/ZhengtongXu/LeTac-MPC.
\end{abstract}
\begin{IEEEkeywords}
Tactile control, deep learning in robotics and automation, perception for grasping and manipulation.
\end{IEEEkeywords}

\section{Introduction}\label{sec:intro}
Grasping is a reactive task in which humans estimate the states and physical properties of the grasped object by tactile feedback of the fingers and dynamically adjust finger behavior. This allows for robust grasping regardless of the object's stiffness, whether it needs to be stationary or dynamically moved, and whether it is subjected to external forces. An ideal grasping algorithm for robots should be able to do the same. In this paper, we study the task of robotic tactile-reactive grasping. We specifically focus on a vision-based tactile sensor, GelSight \cite{yuan2017gelsight}, due to its advantages of perceiving high-resolution tactile feedback that contains rich information on the physical properties and states of the grasped object.

In recent years, many works demonstrate that tactile sensors can perceive various information of objects, such as texture \cite{luo2018vitac,ward2020neurotac,kaboli2018robust}, material \cite{omarali2022tactile,luo2017robotic}, shape \cite{yin2022multimodal,murali2022deep}, and hardness \cite{yuan2017shape}. However, utilizing tactile feedback to design robotic tactile-reactive grasping controllers faces several challenges.

1)  \textbf{Complex object physical properties: } Objects may have different physical properties, which poses great challenges to tactile-based perception. Existing tactile perception methods often assume that grasped objects are rigid \cite{dong2019maintaining,hogan2020tactile}, and sensors, such as vision-based tactile sensors \cite{lin2022dtact,zhang2022deltact,donlon2018gelslim,lambeta2020digit}, may not be sensitive if the grasped object is softer than the sensor elastomer \cite{yuan2017gelsight}. As a result, it can be difficult for controllers to get high-quality and stable feedback signals using model-based extraction methods when handling objects with different physical properties, and it can be challenging to generalize the same set of controller parameters to different objects, as their physical properties and dynamics may differ.

2) \textbf{Dynamic and force-interactive tasks: } Statically grasping an object is often insufficient for dynamic and force-interactive tasks. For example, when robots use the gripper to pick fruit, there is a force interaction between the gripper, the fruit, and the branches. Similarly, in dynamic pick-and-place tasks, the grasped object may experience inertia force and collide with unexpected obstacles during transportation. In both of these scenarios, if a gripper is not responsive enough to state changes, external disturbances, and force interactions, it can result in the grasped object falling from the gripper. In addition, if a larger grasping force is used for a stronger grasp, delicate objects such as fruits, eggs, bread, and crackers can be destroyed. Therefore, it is challenging to design a reactive controller that can grasp with appropriate force according to the physical properties and states of the object.

3) \textbf{High-resolution tactile feedback integration:} 
Another challenge is the integration of high-resolution tactile feedback, such as the vision-based tactile sensor GelSight \cite{yuan2017gelsight}, which contains rich information, into a real-time control loop for tactile-reactive grasping. Traditional control methods are based on low-dimensional feedback signals, which require the extraction of required information from tactile images for reduction of dimensionality \cite{she2021cable, shirai2023tactile,lambeta2020digit}. However, many feature extraction methods make assumptions about the physical properties and shape of the object \cite{dong2019maintaining,hogan2020tactile,she2021cable}, making it difficult to generalize to objects with different physical properties and shapes. On the other hand, robot learning methods are based on high-dimensional observations such as visual and tactile images. However, these methods often overlook crucial aspects of the control process, such as convergence, response speed, control frequency, and constraints. As a result, they are more suitable for generating open-loop or low-frequency actions, but not well-suited for tasks that demand rapid and reactive responses.

To address these challenges, we propose LeTac-MPC, a learning-based model predictive control (MPC) for tactile-reactive grasping. 
 The novelties of the proposed approach are as follows.

1) We design a neural network (NN) architecture with a differentiable MPC layer as the output layer. The MPC layer is modeled according to the control objectives to rationalize the embeddings extracted by the NN from the tactile feedback. The combination of NN and MPC layer provides the advantages of strong generalizability and ease of implementation as a real-time controller. We implement the trained NN and MPC layer in a model predictive controller that is convergent and can run at a frequency of 25~Hz. This controller enables robust reactive grasping of various daily objects in dynamic and force-interactive tasks.

2) To train the model, we use four standardized blocks of different materials with different physical properties to collect a dataset and propose a fully automated data collection pipeline. Finally, we show that despite our data collection with standardized blocks in terms of sizes, shapes, and materials, and with no textures on the block surface, our trained controller can generalize to various daily objects with different sizes, shapes, materials, and textures. This shows the strong generalizability of LeTac-MPC.

3) We also compare two purely model-based tactile-reactive controllers (MPC and PD) as baseline methods. Through experimental comparisons between these two model-based controllers, LeTac-MPC, and open-loop grasping, we show that LeTac-MPC has optimal performance in dynamic and force-interactive tasks and optimal generalizability.
\section{Related work}\label{sec:related}

This section provides a summary of previous research related to exploiting tactile sensing for control and learning.

\subsection{Tactile Control}

Previous research has shown that tactile feedback provided by vision-based tactile sensors can be used to design manipulation controllers for rigid objects \cite{hogan2020tactile,kim2021active,shirai2023tactile}, cables \cite{she2021cable}, and clothes \cite{sunil2022visuotactile}. These methods extract state estimations of the manipulated object, such as pose and contact line with the environment, from tactile feedback, and then design controllers. However, these approaches are not generalizable to objects with different physical properties due to limitations in their model-based state extraction methods.  The work presented in \cite{lloyd2021goal} proposes a tactile control framework for robotic pushing that uses tactile feedback to estimate and predict state information. In \cite{zheng2022autonomous}, a learning-based method is proposed for flipping pages via tactile sensing. However, these two works do not fully use the rich information about the object's physical properties that can be obtained from tactile feedback, which is critical for enhancing generalization to different objects.
 
Recently, the work in \cite{oller2022manipulation} proposes a method to learn soft tactile sensor membrane dynamics. However, this method does not consider the dynamics of the manipulated object, which is important for manipulating deformable objects. The work in \cite{tian2019manipulation} proposes a learning-based tactile MPC framework. However, this framework is not suitable for real-time control tasks because it uses a video prediction model, resulting in high computational cost. As mentioned in the paper, during the experimental testing, this MPC only runs at around 1~Hz.

\subsection{Tactile Grasping}

Multiple works exploit tactile feedback and NN to predict grasp success and stability \cite{calandra2017feeling,si2022grasp,kanitkar2022poseit,yan2022robotic}. Based on the predictive ability of NN, the works in \cite{calandra2018more,hogan2018tactile,han2021learning,feng2020center} sample potential grasping actions and select the action with the lowest cost as the optimal grasping. However, these methods do not focus primarily on real-time control. They do not consider important aspects such as convergence, response speed, control frequency, and constraints in the control process. Consequently, they are better suited for low-frequency action generation rather than real-time control. For example, in \cite{han2021learning}, the computation times of different NN models for one trial are both more than 0.28~s. As a result, these methods cannot respond fast and smooth enough when the robot performs dynamic and force-interactive tasks. The works in \cite{james2020slip,dong2019maintaining} maintain stable grasping by slip detection. Therefore, the accuracy of slip detection determines the performance of the method. However, in practical situations, slip detection may be insufficiently sensitive due to the different shapes and physical properties of grasped objects. 

In \cite{kolamuri2021improving,she2021cable}, grasping controllers with model-based tactile feature extraction are proposed. However, these methods cannot be widely applicable to objects with different shapes and physical properties since some feature extraction methods only work on specific types of object. For example, GelSight, as used in \cite{kolamuri2021improving,she2021cable}, cannot obtain obvious features for objects softer than the sensor gel \cite{yuan2017gelsight}. These features are extremely difficult to process and extract using model-based methods. Instead, existing work on tactile perception shows that NN can effectively extract these features \cite{yuan2017shape}. Therefore, it is crucial to expand learning-based tactile perception to develop a learning-based tactile control method that can generalize to objects with different shapes, textures, and physical properties.

\subsection{Learning Physical Properties for Manipulation}

Learning-based methods that incorporate tactile and visual feedback can achieve dynamic manipulation without prior knowledge of the physical properties of the manipulated object \cite{wang2020swingbot,zeng2020tossingbot,chi2022iterative}. However, these methods are not suitable for tasks that require reactive behavior. In contrast, many manipulation tasks require real-time sensing and real-time control. In \cite{li20223d,driess2022learning}, the authors propose methods that utilize neural radiance fields (NeRF) for manipulating objects with complex properties. However, these methods are not suitable for tasks demanding real-time control and fast response due to the relatively high computational costs. In \cite{yu2022global}, a model learning method for elastic deformable linear objects is proposed that enables real-time cable manipulation. The input of its model is the cable state extracted from the raw image, which requires markers on the cable for tracking. However, in practice, extracting cable states from images without markers is challenging. In summary, achieving real-time, learning-based robot control directly using high-dimensional images as input remains a significant challenge.

The work in \cite{bi2021zero} proposes a learning-based real-time tactile control method to swing up rigid poles with different physical properties. However, there is no research on real-time tactile controllers for soft and deformable objects.

\section{Approach}

This section outlines our proposed learning-based MPC for tactile-reactive grasping. Initially, we design an NN architecture that includes a differentiable MPC layer (Sections \ref{subsec:encoding} and \ref{subsec:mpc_layer}) to extract and represent the diverse and complex physical properties of grasped objects. Then, we introduce the details of training our proposed model (Section \ref{sec:training}). Furthermore, we describe our automated data collection pipeline in Section \ref{subsec:data_collection}. Finally, we present the implementation of the NN as a real-time controller in Section \ref{subsec:imp}.

By our method, the gripper can successfully grasp objects with different physical properties and shapes while applying the appropriate force to avoid damage to fragile and delicate items. Our method can also adjust the grasping in real-time based on tactile feedback to improve the grasping robustness when performing dynamic and force-interactive tasks.

\subsection{Tactile Information Encoding}\label{subsec:encoding}

 We use GelSight \cite{yuan2017gelsight}, a vision-based tactile sensor that provides high-resolution images of the contact surface geometry and strain field, as shown in Fig.~\ref{fig:firstPage}. To extract states and physical properties of grasped objects, we use a convolutional neural network (CNN) with a multi-layer perceptron (MLP) to encode a tactile image into a low-dimensional embedding $\mathbf{f}\in\mathbb{R}^M$, where $M$ is the dimension of the embedding. The works on tactile perception in \cite{yuan2017shape,yuan2018active} demonstrate effective feature extraction from tactile images using pre-trained visual models. In our implementation, we use a pre-trained ResNet-152 \cite{he2016deep} as the CNN architecture and replace the last layer with a two-layer MLP with ReLU activation. 

\begin{figure*}[ht!]
\centering
\begin{overpic}[trim=0 0 0 0,clip, width=0.95\textwidth]{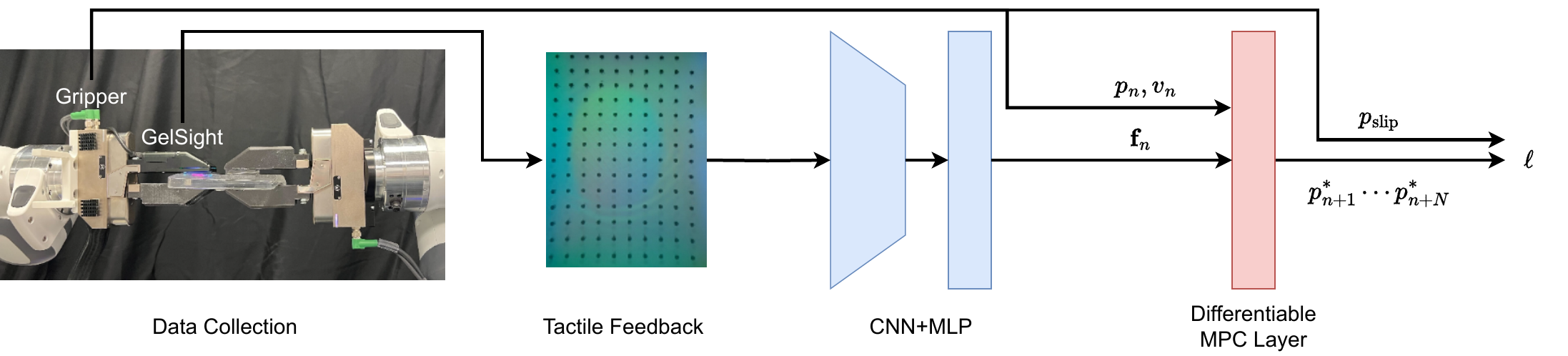}
\end{overpic}
\caption{LeTac-MPC network model. We use the raw image of tactile feedback as the model input. $\ell$ is our proposed loss function. }
\label{fig:model_pipline}
\end{figure*}

\subsection{Differentiable MPC Layer}\label{subsec:mpc_layer}

{In this section, we introduce a differentiable optimization layer that utilizes the extracted embeddings for controller design, as depicted in Fig.~\ref{fig:model_pipline}. This layer formulates an MPC problem, which we henceforth refer to as a differentiable MPC layer.}

Differentiable optimization layers \cite{amos2017optnet} function similarly to standard layers in neural networks, capable of performing both forward passes and backpropagation in batch form. During the forward pass, the layer takes in variables that define an optimization problem and outputs results related to the solution of that problem. Consequently, the forward pass of this layer involves solving optimization problems in batch form. Backpropagation in the differentiable optimization layer involves updating the parameters of the optimization problem based on the calculated gradients.

In this paper, we consider the two-finger gripper with one degree of freedom linear motion. This kind of gripper is widely used because of its practicality. Denoting the sampling time interval as $\Delta t$, we can first formulate the gripper motion model as:

\begin{align}
\label{eq:gripper_model}
\left[\begin{array}{c}
    p_{n+1}\\
    v_{n+1}
\end{array}\right] 
&= 
\mathbf{A}_g
\left[\begin{array}{c}
    p_{n}\\
    v_{n}
\end{array}\right]+
\mathbf{B}_g
a_n,\\
\text{where}~\mathbf{A}_g&= 
\left[\begin{array}{cc}
    1 & \Delta t\\
    0 & 1 
\end{array}\right]\in\mathbb{R}^{2 \times 2},
\mathbf{B}_g=
\left[\begin{array}{c}
    \frac{1}{2}\Delta t^2\\
    \Delta t  
\end{array}\right]\in\mathbb{R}^2.\notag
\end{align}
{Scalars $p,v$, and $a$ are the position, velocity, and acceleration of the motion of gripper fingers. Note that $p,v$, and $a$ do not refer to the motion of the gripper itself in Cartesian space but rather denote the one-degree-of-freedom motion of the gripper fingers.} The right subscript $n$ represents the $n$-th time step. 

In order to utilize the embeddings $\mathbf{f}\in\mathbb{R}^M$ from a NN to design a controller, we propose a differentiable MPC layer that takes into account the fact that the features of tactile images change as the gripper width changes when grasping an object. We formulate the relationship between the tactile embeddings and gripper states as a linear model 
$$ \mathbf{f}_{n+1} = \mathbf{f}_{n} +  {\mathbf{A}}_f v_n, \notag$$
where  $\mathbf{A}_f \in \mathbb{R}^{M}$  would be learned. This modeling approach does not lose its applicability to complex dynamics because the tactile embeddings are actually generated by an NN, which has strong nonlinear fitting capabilities. Moreover, for MPC, we do not expect the model to maintain high accuracy over long timescales because MPC operates at a high frequency with receding horizon control. Therefore, as long as the model demonstrates adequate representational capability in a short-term local area, it is sufficient for MPC performance. The combination of the NN and the MPC layer provides the advantages of strong generalizability and ease of implementation as a real-time controller. The forward pass of the MPC layer involves solving the following optimization problem:

\begin{align}
\label{eq:mpc_layer}
        \mathbf{a}_n^*=&\arg \min _{\mathbf{a}_n}          P(\mathbf{f}_{n+N}^T \mathbf{Q}_f \mathbf{f}_{n+N}+
        Q_v v_{n+N}^2)\notag\\
    &+\sum_{k=n}^{n+N-1}
        \mathbf{f}_{k}^T \mathbf{Q}_f \mathbf{f}_{k}+
        Q_v v_{k}^2+Q_a a_{k}^2,\\
        \text{subject to}&\notag\\
\left[\begin{array}{c}
    \mathbf{f}_{n+1}\\
    p_{n+1}\\
    v_{n+1}
\end{array}\right] 
&=\left[\begin{array}{ccc}
    \mathbf{I}^{M\times M}&\bar{\mathbf{A}}_f\\
    \mathbf{0}^{2\times M}&\mathbf{A}_g
\end{array}\right]
\left[\begin{array}{c}
    \mathbf{f}_{n}\\
    p_{n}\\
    v_{n}
\end{array}\right]+
\left[\begin{array}{c}
\mathbf{0}^{M\times1}\\
\mathbf{B}_g
\end{array}\right]
a_n,\label{eq:mpc_layer_tran}\\
\text{where}~\bar{\mathbf{A}}_f &= \left[\begin{array}{cc}\mathbf{0}^{M\times 1}&\mathbf{A}_f\end{array}\right]\in \mathbb{R}^{M \times 2}.\notag
\end{align}
{$n$ denotes the current time step, and we can define the initial time step as $n=0$. As time progresses, there is no upper limit to $n$. Therefore, $n = 0,1,2,3,\ldots$.}  $\mathbf{a}_n = [a_n,a_{n+1},\ldots{},a_{n+N-1}]^T\in\mathbb{R}^{N}$ is the acceleration sequence, $N$ is the prediction horizon, the vector $\mathbf{A}_f \in \mathbb{R}^{M}$ is part of the state transition matrix, $\mathbf{Q}_f\in \mathbb{R}^{M \times M}$ is the weight matrix for $\mathbf{f}_n$, and $Q_v$ and $Q_a$ are the weight coefficients for $v_n$ and $a_n$, respectively. The scalar $P$ is used to amplify the terminal cost to speed up convergence. {Note that  the ``$n$" in $v_n, v_{n+1}$ represents the current time step $n$, while the ``$N$" in $v_{n+N}$ represents the MPC horizon length $N$. } Importantly, we design $\mathbf{f}_{n+1}$ is only directly related to $\mathbf{f}_{n}$ and $v_n$. This design helps the model avoid the influence of the different sizes of the grasped objects. Therefore, the trained model can be generalized to objects with different sizes naturally.

Upon solving the optimization problem \eqref{eq:mpc_layer} to obtain $\mathbf{a}_n^* = [a_n^*,a_{n+1}^*,\ldots,a_{n+N-1}^*]^T\in\mathbb{R}^{N}$, we can generate the entire trajectory of the gripper motion output by the MPC controller, including $p^*,v^*$ and $a^*$, using equation \eqref{eq:gripper_model}. Ultimately, the final output of the MPC layer consists of the position sequence $[p^*_{n+1},\ldots,p^*_{n+N}]^T\in\mathbb{R}^{N}$, as shown in Fig.~\ref{fig:model_pipline}. The rationale behind this design is that most grippers, such as the WSG gripper used in this paper, have a low-level tracking controller capable of effectively tracking high-level position commands. By outputting the position sequence of the trajectory generated by the MPC layer and sending it to the low-level controller, we can approximate the tracking of the entire trajectory, including $p^*,v^*$ and $a^*$, in a simple yet effective manner. For more details, see the experiments described in Section \ref{sec:exp}.

\subsection{Training}\label{sec:training}

For this MPC layer, we choose $\mathbf{A}_f \in \mathbb{R}^{M}$ and $\mathbf{Q}_f\in \mathbb{R}^{M \times M}$ as the parameters we want to learn, since they cannot be derived from the model-based formulation. On the other hand, the scalars $Q_v$, $Q_a$, and $P$ are model-based control parameters that are directly related to the response and convergence speed of the MPC. Therefore, we do not consider them as parameters to learn and can assign reasonable values directly. Additionally, it is important to note that $Q_v$, $Q_a$, and $P$ must be positive. {The positivity of these scalars is crucial for the feasibility of optimization problem~\eqref{eq:mpc_layer}. We will explain why these scalars must be positive in Theorem~\ref{thm1}.}

Optimization problem \eqref{eq:mpc_layer} is a quadratic program (QP). The work in \cite{amos2017optnet} demonstrates that a QP can be represented as a differentiable layer. In addition, a method is proposed to solve multiple QPs in batch form in \cite{amos2017optnet}. Therefore, the NN shown in Fig.~\ref{fig:model_pipline} can perform forward pass and backpropagation in batch form if the optimization problem \eqref{eq:mpc_layer} is feasible throughout the training process. To ensure feasibility, the resulting QP's Hessian matrix must remain symmetric positive definite when $\mathbf{A}_f$ and $\mathbf{Q}_f$ change during training \cite{boyd2004convex}.

Regarding the feasibility of our proposed differentiable MPC layer, we state the following theorem.

\begin{theorem}
\label{thm1}
If $Q_v,Q_a,P > 0$ and $\mathbf{Q}_f$ is symmetric positive definite, then the resulting QP from equations \eqref{eq:mpc_layer} and \eqref{eq:mpc_layer_tran} is feasible regardless of any changes to the dimension of the embedding $M$, the prediction horizon $N$, and $\mathbf{A}_f$.
\end{theorem}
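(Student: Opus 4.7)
The plan is to eliminate the equality constraints \eqref{eq:mpc_layer_tran} and then show that the reduced Hessian in $\mathbf{a}_n$ is symmetric positive definite. Concretely, set $\mathbf{x}_k := [\mathbf{f}_k^T,p_k,v_k]^T \in \R^{M+2}$ and rewrite \eqref{eq:mpc_layer_tran} compactly as $\mathbf{x}_{k+1}=\tilde{\mathbf{A}}\mathbf{x}_k+\tilde{\mathbf{B}}a_k$. Since the initial state $\mathbf{x}_n$ is supplied to the MPC as a parameter, unrolling the linear dynamics writes each future state as an affine function of the decision vector, $\mathbf{x}_k=\tilde{\mathbf{A}}^{k-n}\mathbf{x}_n+\mathbf{S}_k\mathbf{a}_n$, where $\mathbf{S}_k$ is the usual sensitivity block assembled from powers of $\tilde{\mathbf{A}}$ acting on $\tilde{\mathbf{B}}$. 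Substituting these expressions back into the cost of \eqref{eq:mpc_layer} collapses the problem to a single unconstrained quadratic in $\mathbf{a}_n \in \R^N$, so the question reduces to analysing the Hessian of that quadratic.

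Let $\mathbf{W}:=\blkdiag(\mathbf{Q}_f,0,Q_v)$ denote the per-stage weight and $P\mathbf{W}$ the terminal weight. Then the Hessian of the reduced quadratic takes the form
\begin{equation*}
H=2Q_a\mathbf{I}_N+2\sum_{k=n+1}^{n+N-1}\mathbf{S}_k^T\mathbf{W}\mathbf{S}_k+2P\,\mathbf{S}_{n+N}^T\mathbf{W}\mathbf{S}_{n+N}.
\end{equation*}
The decisive observation is that the cost term $\sum_{k=n}^{n+N-1}Q_a a_k^2$, being expressed directly in the decision variables, contributes exactly $2Q_a\mathbf{I}_N$, which is strictly positive definite whenever $Q_a>0$. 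Every remaining contribution is a congruence $\mathbf{S}_k^T\mathbf{W}\mathbf{S}_k$ of the positive semidefinite matrix $\mathbf{W}$ (PSD because $\mathbf{Q}_f\succ 0$ and $Q_v>0$), and is therefore itself PSD; the terminal term is scaled by $P>0$ and inherits the same property. Adding a strictly PD matrix to PSD matrices yields $H\succ 0$.

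This argument is uniform in $M$, $N$, and $\mathbf{A}_f$, since these parameters only alter the shape and entries of $\tilde{\mathbf{A}}$ and of the sensitivity blocks $\mathbf{S}_k$ and cannot spoil either the PSD nature of $\mathbf{W}$ or the strictly positive diagonal $2Q_a\mathbf{I}_N$. Hence the reduced QP is strictly convex for every parameter configuration encountered during training, it admits a unique minimizer, and by the OptNet formulation of \cite{amos2017optnet} the associated KKT system is invertible, giving the ``feasibility'' (and hence the forward/backward differentiability) required by the MPC layer. I do not anticipate a substantive technical obstacle here; the only subtlety worth flagging is that the elimination step must be performed before inspecting the Hessian, because the naive joint Hessian over states and controls would only be PSD (the gripper position $p_k$ carries no cost weight), while the reduced Hessian is strictly PD precisely because the $Q_a$ term dominates the diagonal after substitution.
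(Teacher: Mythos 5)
Your proposal is correct and uses the same overall strategy as the paper: eliminate the dynamics \eqref{eq:mpc_layer_tran} by condensing, so the QP becomes unconstrained in $\mathbf{a}_n$, and then show that the resulting Hessian is symmetric positive definite. The two arguments differ only in the mechanism that supplies \emph{strict} definiteness. The paper first drops the position coordinate from the weighted state (since $p$ carries no cost), so its per-stage weight $\mathbf{Q}=\blkdiag(\mathbf{Q}_f,Q_v)$ is positive definite; it then writes $\mathbf{H}=2(\bar{\mathbf{Q}}_a+\bar{\mathbf{S}}^T\bar{\mathbf{Q}}\bar{\mathbf{S}})$ and argues that the block lower-triangular sensitivity matrix $\bar{\mathbf{S}}$ has full column rank $N$ (because $\mathbf{B}\neq\mathbf{0}$), so the state-cost contribution $\bar{\mathbf{S}}^T\bar{\mathbf{Q}}\bar{\mathbf{S}}$ is itself positive definite. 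You instead keep the full $(M+2)$-dimensional state, accept the merely positive semidefinite weight $\blkdiag(\mathbf{Q}_f,0,Q_v)$, and obtain strictness entirely from the control penalty $2Q_a\mathbf{I}_N$, which acts directly on the decision variables. Both routes are valid and both are uniform in $M$, $N$, and $\mathbf{A}_f$. Your version is slightly more economical: it needs no rank/controllability-style argument and makes explicit that $Q_a>0$ alone guarantees strict convexity, independently even of $\mathbf{B}$. The paper's version buys the complementary fact that the state-cost term is already positive definite on its own, so definiteness there does not hinge on the $Q_a$ term. Your closing caveat, that the joint Hessian over states and controls is only positive semidefinite and the elimination must precede the definiteness check, is accurate; the paper reaches the same end by excluding $p$ from the weighted state before forming the condensed Hessian. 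I see no gap in your argument.
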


\begin{proof}
Since the optimization problem \eqref{eq:mpc_layer} does not have constraints besides \eqref{eq:mpc_layer_tran}, the resulting QP is unconstrained.  Therefore, to prove the theorem, it is sufficient to demonstrate that the Hessian matrix of the resulting QP is symmetric positive definite \cite{boyd2004convex}. The resulting Hessian matrix from \eqref{eq:mpc_layer} is 
\begin{align}
\label{eq:H}
\mathbf{H} &= 2(\bar{\mathbf{Q}}_a + \bar{\mathbf{S}}^T\bar{\mathbf{Q}}\bar{\mathbf{S}}),\\
\text{where}~\bar{\mathbf{Q}}_a &=   \text{blkdiag}(Q_a,\dots,Q_a)\in \mathbb{R}^{N \times N},\notag\\
\bar{\mathbf{Q}} &=   \text{blkdiag}(\mathbf{Q},\dots,\mathbf{Q},P\mathbf{Q})\in \mathbb{R}^{N(M+1) \times N(M+1)},\notag\\
\mathbf{Q} &=   \text{blkdiag}(\mathbf{Q}_f,Q_v)\in \mathbb{R}^{(M+1) \times (M+1)},\notag\\
\bar{\mathbf{S}} &= 
\left[\begin{array}{cccc}
\mathbf{B}&\mathbf{0}&\dots&\mathbf{0}\\
\mathbf{A}\mathbf{B}&\mathbf{B}&\dots&\mathbf{0}\\
\vdots&\vdots&\ddots&\vdots\\
\mathbf{A}^{N-1}\mathbf{B}&\mathbf{A}^{N-2}\mathbf{B}&\dots&\mathbf{B}
\end{array}\right],\label{eq:rank}\\
\bar{\mathbf{S}} &\in \mathbb{R}^{N(M+1) \times N},\notag\\
\mathbf{A} &=  \left[\begin{array}{cc}
    \mathbf{I}^{M\times M}&\mathbf{A}_f\\
    \mathbf{0}^{1\times M}&1
\end{array}\right]\notag\in \mathbb{R}^{(M+1) \times (M+1)},\\
\mathbf{B} &=  \left[\begin{array}{c}
    \mathbf{0}^{M\times 1}\\
    \Delta t
\end{array}\right]\in \mathbb{R}^{M+1}.\notag
\end{align}

Because $Q_v,Q_a,P > 0$ and $\mathbf{Q}_f$ are symmetric positive definite, $\bar{\mathbf{Q}}$ and $\bar{\mathbf{Q}}_a$ are symmetric positive definite. Therefore, by equation \eqref{eq:H}, we can see that the Hessian matrix is symmetric positive semi-definite.

To prove that the Hessian matrix is symmetric positive definite, we can prove rank($\bar{\mathbf{S}} $)~$= N$. Because $\mathbf{B} \neq \mathbf{0}$, we can clearly see from equation \eqref{eq:rank} that rank($\bar{\mathbf{S}} $)~$= N$. Therefore, we have shown that the Hessian matrix of the resulting QP is symmetric positive definite, which completes the proof of the theorem.
\end{proof}

Theorem~\ref{thm1} demonstrates that if we ensure that $\mathbf{Q}_f$ is symmetric positive definite, the selection of $N$ and $M$, and uncertainty in $\mathbf{f}_n$ and $\mathbf{A}_f$ will not affect the QP's feasibility. As a result, we can freely tune the parameters and train the model stably. To ensure that $\mathbf{Q}_f$ remains symmetric positive definite during training, we utilize a Cholesky factorization
$$\mathbf{Q}_f = \mathbf{L}_f\mathbf{L}_f^T + \epsilon \mathbf{I}^{M\times M},$$
and directly learn $\mathbf{L}_f$, where $\mathbf{L}_f$ is a lower triangular matrix and $\epsilon$ is a very small scalar. We choose $\epsilon = 1 \times 10^{-4}$.

Although Theorem~\ref{thm1} shows that the selection of the prediction horizon $N$ and the embedding dimension $M$ does not affect the feasibility of the QP, it is still important to choose the appropriate values for these parameters. Very small $N$ and $M$ can negatively impact the network's ability to extract useful information and generalize well. Conversely, setting $N$ and $M$ too large can slow down the optimization problem's solution due to the increase in dimensionality when implementing the trained model as a controller, which can result in a low control frequency. Moreover, when $N$ is excessively large, the linear MPC layer may result in a large prediction error. Therefore, the choice of $N$ and $M$ should balance the expressiveness of the model and the computational efficiency of the controller. In our implementation, we choose $N=15$ and $M=20$ to strike this balance.

\begin{figure*}[ht]
\centering
\begin{overpic}[trim=0 0 0 0,clip, width=0.75\textwidth]{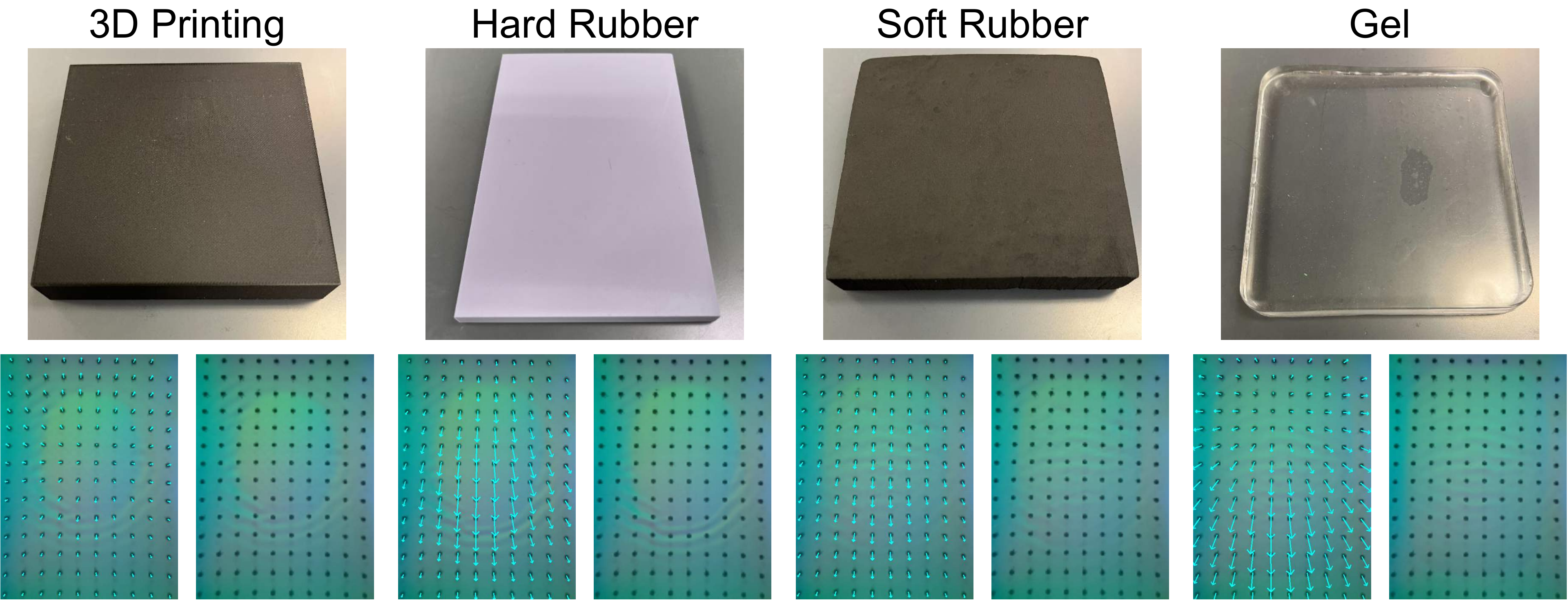}
\end{overpic}
\caption{For each type of material, bottom left is the raw tactile image overlapping with the marker tracking when grasping the corresponding block and bottom right is the raw tactile image without marker tracking. These tactile images are collected under the same grasping force. {Our goal here is to show that different materials have different physical properties, which leads to different features in the tactile image under the same grasping conditions. Therefore, we do not estimate the specific values of tangential force here.}}
\label{fig:data_collection_blocks}
\end{figure*}

\begin{figure*}[ht]
\begin{overpic}[trim=0 0 0 0,clip, width=0.35\textwidth]{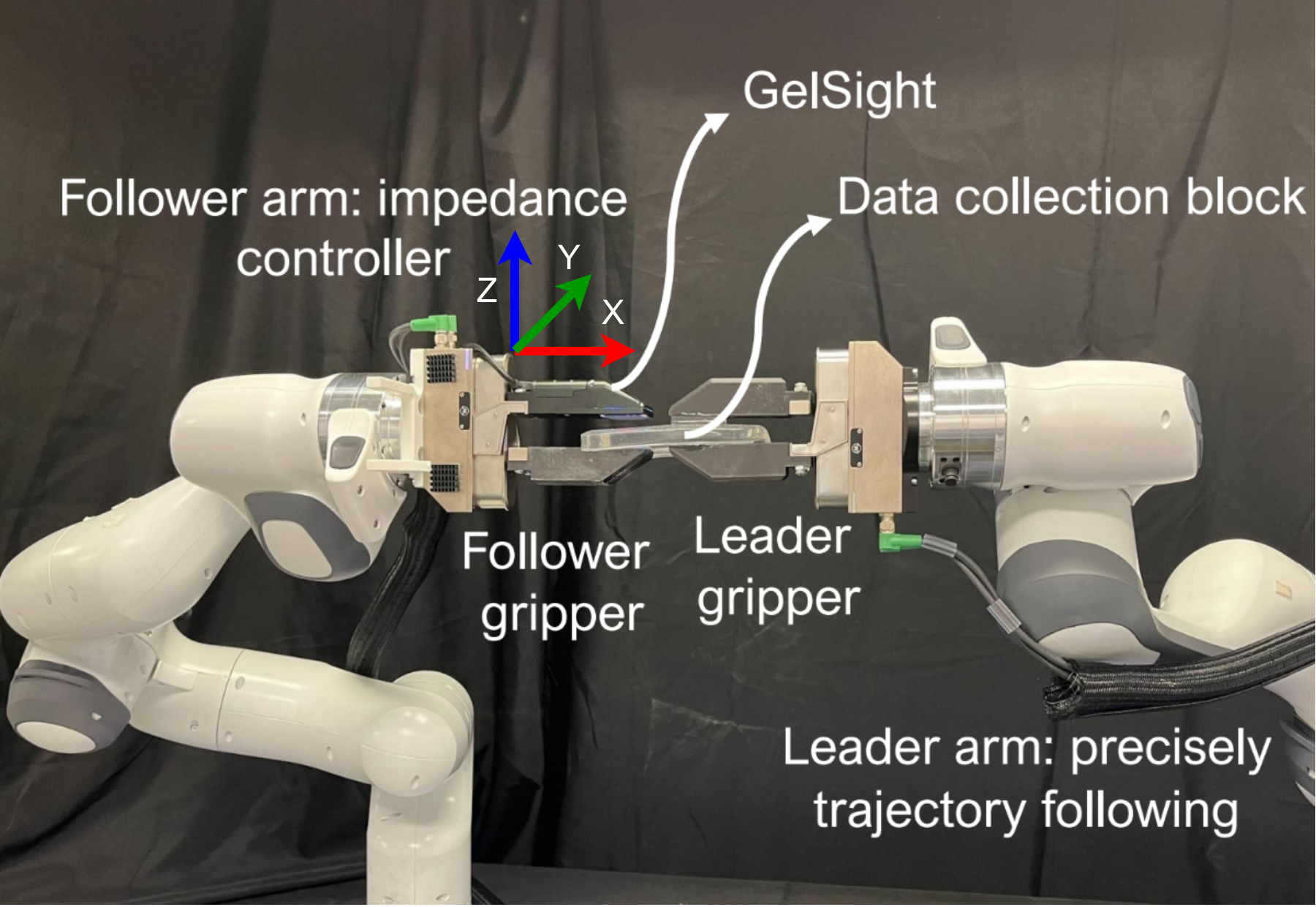}
\end{overpic}
\label{fig:data_collection_setup}
\begin{overpic}[trim=0 0 0 0,clip, width=0.64\textwidth]{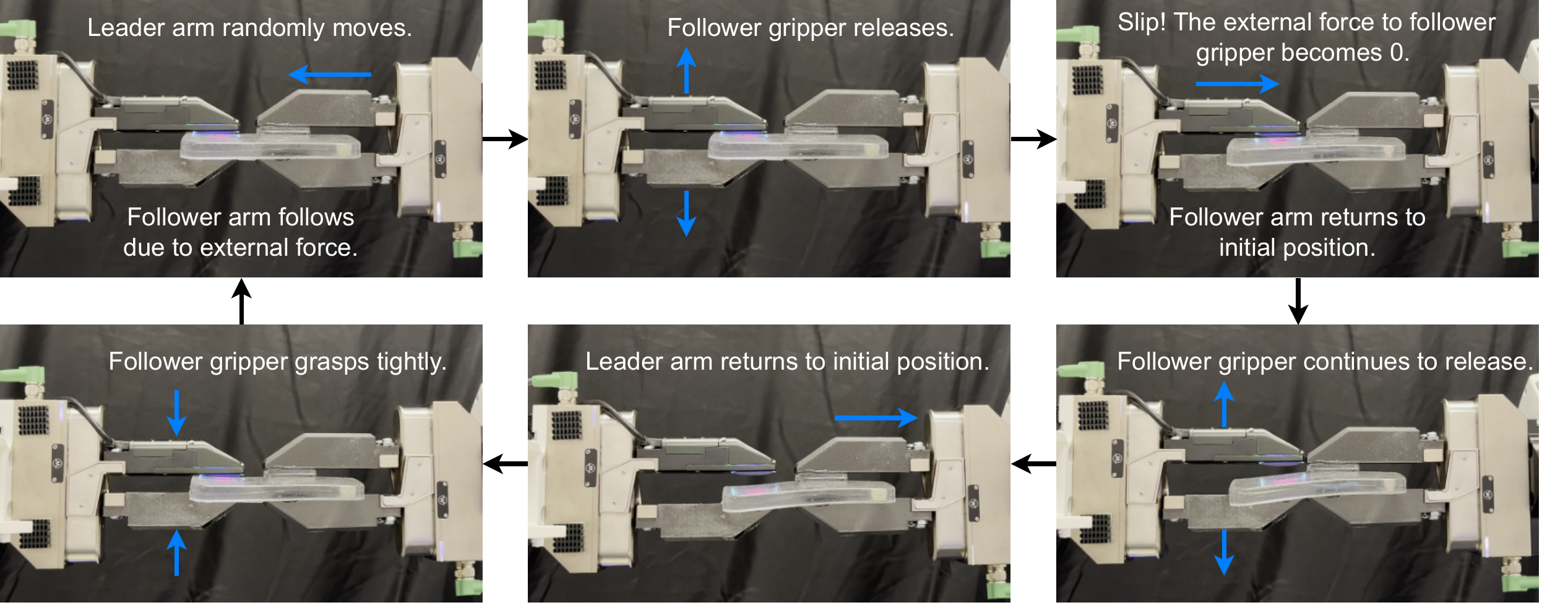}
\end{overpic}
\caption{Left: data collection setup. Right: automated data collection pipeline. {The $xyz$-coordinate of the end-effector's position increments is shown in the left figure.}}
\label{fig:data_collection_pipeline}
\end{figure*}

\begin{figure}[ht]
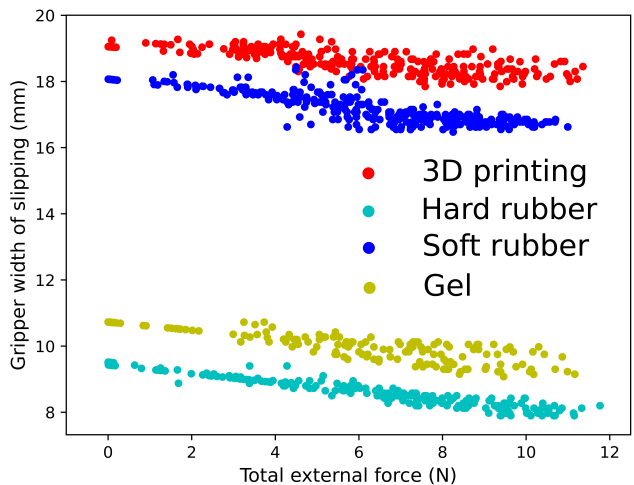

\centering
\begin{overpic}[trim=0 0 0 0,clip, width=0.48\textwidth]{images/data_visual.pdf}
\end{overpic}
\caption{Collected data visualization.}
\label{fig:data_distri}
\end{figure}

{ Denote the loss function as $\ell$ (see Section~\ref{subsec:loss_function}),  ``differentiable" in our formulation means,  gradients of  $\ell$  with respect to $\mathbf{A}_f$ and $\mathbf{L}_f$, denoted as $\nabla_{\mathbf{A}_f}^{\ell}$ and $\nabla_{\mathbf{L}_f}^{\ell}$,
can be computed with standard backpropagation. Additionally, by employing the batched QP solver introduced in \cite{amos2017optnet}, our proposed differentiable MPC layer can be integrated into a NN, enabling the training of the entire model in a standardized manner. }

\subsection{Data Collection}\label{subsec:data_collection}
{
We selected four different types of materials for data collection, as illustrated in Fig.~\ref{fig:data_collection_blocks}. It is important to mention that we chose only standardized blocks without surface texture and do not require specific sizes, as long as they fit the gripper. By doing so, the data collection process is simplified. we show in Section~\ref{sec:exp} that our proposed network model can learn from these standardized blocks and generalize to other daily objects with different textures, sizes, shapes, and physical properties.}

{
These blocks are used as grasped objects to capture raw tactile images. The tactile feedback of these blocks varies depending on their unique physical properties. As shown in Fig.~\ref{fig:data_collection_blocks}, we can see the differences in tactile feedback between these 4 blocks.}

The contact area of hard materials is smaller but more distinct, while the contact area of soft materials is larger but subtler. This is because of the difference of the objects' stiffness. When objects with different stiffness are grasped under the same applied force, the deformation of the grasped objects and the resulting deformation on the GelSight surface will be different, resulting in different raw tactile images.

The 3D printing block serves the role of a rigid object. We utilized a Markforged Onyx printer, which prints using micro carbon fiber filled nylon. This material is Onyx composite material. Its tensile modulus is 2.4 Gpa, and its flexural modulus is 3 Gpa \cite{3dpirnt}. When printed with a block form factor, it behaves rigid when grasped by our grippers. Compared with the hard block made by 3D printing, hard rubber is softer, so the contact area is larger. Additionally, the contact area of gel is the most subtle. The gel is made of XP-565 with shore A hardness of 27 \cite{gel1}, which is approximate 0.0009942 GPa of the Young's modulus \cite{gel2}. The stiffness of the softer rubber is between the gel and hard rubber. Finally, the difference in marker displacements can be attributed to the fact that materials with higher surface friction coefficients tend to induce a more pronounced displacement on the GelSight's surface. { It is worth noting that XP-565 is a commonly used material to make silicone gel for GelSight sensors.  The gel made of XP-565 has a nominal shore A hardness of 27 \cite{gel1}. Therefore, the stiffness of the gel in Fig.~\ref{fig:data_collection_blocks} is very close to the gel of the GelSight sensors made by XP-565. We believe that this point can serve as a benchmark for selecting data collection blocks: the chosen blocks usually possess a stiffness greater than that of the sensor gel. Otherwise, the deformation caused by the block on the sensor gel would be very subtle or even be invisible.} 

We believe that when other people select materials to construct a dataset, following our approach of choosing one hard material (such as rigid 3D printed objects or wood) and then selecting materials with progressively lower stiffness, up to a total of four materials, would be an effective process for dataset construction.

Our network input consists only of raw images. Since marker displacements are actually obtained by optical flow from raw images, the raw images themselves already contain this information. To collect data for training the NN shown in Fig.~\ref{fig:model_pipline}, we propose an automatic data collection pipeline utilizing a dual-arm setup. We use Franka Panda robotic arm. One of the arms serves as the {leader} arm, and we control it by making it follow specific trajectories. The other arm serves as the follower arm, and we implement an impedance controller for it. The WSG 50-110 grippers are attached to the end-effectors of both arms, with one finger on the follower arm mounting the GelSight tactile sensor.

A complete data collection trial is illustrated in Fig.~\ref{fig:data_collection_pipeline}. We perform multiple iterations of this trial to collect the necessary data. At the start of each trial, both grippers securely grasp the object. We randomly select position increments $\Delta x$ and $\Delta y$ within the range of $|\Delta x| < 35~\text{mm}$ and $|\Delta y| < 21~\text{mm}$, and apply these increments to the end-effector of the leader arm. The leader arm then moves, and the follower arm follows its motion due to the impedance controller. As a result, an external force is horizontally applied to the grasped object, with the magnitude and direction determined by the deviation $\Delta x$ and $\Delta y$ of the follower manipulator's end-effector from the equilibrium point of its impedance controller.

Next, the follower gripper gradually relaxes with a velocity of 4.5~mm/s, and we record the width of the follower gripper $p_n$ and the raw image of the tactile feedback at 60~Hz.  When the follower gripper's grasping force applied to the object cannot provide a maximum static friction force greater than the external force from the impedance controller, slipping occurs between the follower gripper and the object. This results in the follower arm moving back to the equilibrium point of the impedance controller.  We record the width of the follower gripper when slipping occurs as the label for all raw images recorded during the entire trial, denoted as $p_{\text{slip}}$. 

Ideally, we hope that the leader gripper maintains an absolutely tight grasping throughout the process and there is no slipping between the leader gripper and the grasped object. To achieve that, we maximize the grasping force of the leader gripper on the data collection block and enhance the friction between the leader gripper’s fingers and the block by adding a gel layer on the finger. Naturally, we could not guarantee that there would be no movement between the object and the leader gripper during extended periods of data collection. However, our setup ensures that any movement, if it occurred, is minimal and does not affect data collection. Therefore, when the leader gripper stays static, the grasped object stays static and we set the velocity of the end-effector of the follower arm to be greater than 5~mm/s as the condition for indicating slipping. Once slipping occurs for 0.2~s, we stop recording the raw images. {The reasons of selecting 0.2~s are as follows. For each material, as shown in Fig.~\ref{fig:data_distri}, when the external force varies from 0 to 12N, the fluctuation range of the $p_{\text{slip}}$ collected for each material is approximately within 2~mm. Using 2~mm as a reference, to ensure the data collected in a reasonable range (for example, to avoid collecting too many images without contact), we chose a distance of 0.9~mm  (0.2~s $\times$ 4.5~mm/s) as the gripper width range of one data collection trial. It is worth noting that the choice of 0.2~s here is approximate. The goal is to prevent the collected data from becoming overly biased.}

For the gripper velocity $v_n$ in the dataset, we randomly select a value within the range of $(-1~\text{mm/s},1~\text{mm/s})$ for each data point. As shown in Fig.~\ref{fig:model_pipline}, $p_n, v_n$, and the raw tactile image are the inputs of our model.

Finally, the follower gripper continues to relax until the object is completely released. After that, the leader arm returns to its initial position, and the follower gripper tightly re-grasps the object. The next trial then begins. 

{The Franka Panda robot features force-torque feedback of the end-effector, eliminating the need for additional sensors.} We simultaneously record force feedback during the data collection process. Although force feedback is not useful for model training, we can visualize its relationship with $p_{\text{slip}}$ to verify the reasonableness of the collected data, as shown in Fig.~\ref{fig:data_distri}. {Fig.~\ref{fig:data_distri} shows that the gripper width and the external force applied to the block are approximately negatively correlated. This indicates an approximate linear positive correlation between the pressure and the total external force applied to each block, aligned with the Coulomb friction model. According to this model, the maximum static friction force $F_{\text{max}} = \mu F_{\text{normal}}$, where $F_{\text{normal}}$ is the normal force exerted on the contact surface and $\mu$ is the coefficient of static friction. Additionally, we know that the gripper width and  $F_{\text{normal}}$ are negatively correlated (a smaller gripper width means tighter grasping and a larger $F_{\text{normal}}$). Therefore, the depiction in Fig.~\ref{fig:data_distri} corroborates this equation.} This demonstrates the effectiveness of our automated data collection method.

We also record some raw tactile images when not applying external force. For these images and some cases where the external force is too small, there is no slipping occurring. We run a linear regression on the remaining data points of this material to estimate $p_{\text{slip}}$ for these data points, as shown in Fig.~\ref{fig:data_distri}. We also add some tactile images without any contact to the dataset. For images without contact, we set $p_n = 30~\text{mm}$, $p_{\text{slip}} = 28.5~\text{mm}+\text{RAND}(-0.5~\text{mm},0.5~\text{mm})$, and $v_n = (p_{\text{slip}} - p_n)/3$, where $\text{RAND}(-0.5~\text{mm},0.5~\text{mm})$ generates a random distance between -0.5~mm and 0.5~mm. 3 refers to 3 seconds. This parameter controls the desired velocity of gripper width when there is no contact.
 A lower value might result in an excessively high desired velocity, which may not be feasible for low-level tracking. If the value is set too high, the gripper contracts too slowly when there is no contact, resulting in a significant decrease in overall system efficiency. Therefore, the choice of 3 seconds is also based on experiments tailored to our hardware specifications. During training, these images without contact allow the gripper to continue closing until it makes contact with the object.

{We mount only one GelSight sensor on the follower gripper, and the other finger is with the same shape and no sensor mounted. Similarly, the two fingers on the leader gripper are all dummy fingers without mounted sensors since the leader arm is used to provide external forces during data collection. The reason for using only one GelSight on one side is because a single GelSight sensor suffices to furnish LeTac-MPC with adequate information for effective grasping. For the parallel gripper, the GelSight sensors mounted on different fingers can offer detailed information on the shape and texture of an grasped object's opposing sides, but often provide overlapping information about the information of contact force, and states and properties of the grasped object. GelSight only can perceive a very small local contact area of the object, and the idea of LeTac-MPC is using this small local area (raw tactile image) to estimate the grasping state and object property implicitly and then generate the corresponding actions. Therefore, we only mount one GelSight sensor, both for data collection and controller deployment. In addition, adding a redundant GelSight sensor would increase the computational load for real-time processing, subsequently reducing the control frequency. In addition, we would like to highlight the following papers \cite{she2021cable,sunil2023visuotactile}. Both of them have only one finger equipped with GelSight sensor but can accomplish complex manipulation tasks.}

\subsection{Loss Function}\label{subsec:loss_function}

We propose the following loss function for our model:
$$
\ell = D(\mathbf{p}_n^*,\mathbf{p}_{\text{slip}}) + \tilde{P}D(p_{n+N}^*,p_{\text{slip}}),
$$  where  $\mathbf{p}^*_n = [
p^*_{n+1},p^*_{n+2},\ldots{},p^*_{n+N}
]^T\in\mathbb{R}^{N}$, $\mathbf{p}_{\text{slip}} = [
p_{\text{slip}},p_{\text{slip}},\ldots{},p_{\text{slip}}
]^T\in\mathbb{R}^{N}$, and $D(\cdot,\cdot)$ is a distance measure. {We use L2 distance in our implementation.} The loss function measures the difference between the output predicted trajectories and the target position $p_{\text{slip}}$ over the sequence of output positions instead of comparing a single output position to $p_{\text{slip}}$. This sequence-based approach promotes the convergence of the controller. Additionally, we use coefficient $\tilde{P}$ to enhance the convergence of the trained controller by increasing the terminal cost. In our implementation, we set $\tilde{P} = 3$.

Recall that in data collection, $p_{\text{slip}}$ represents the maximum gripper width required for the gripper to hold the data collection blocks without slipping when external forces are present. Selecting $p_{\text{slip}}$ as the target position means aiming for training a model that can output the maximum gripper width that can maintain stability of the grasped object in the hand, which corresponds to applying minimal gripping force to the grasped object.

\begin{figure*}[ht]
\centering
\begin{overpic}[trim=0 0 0 0,clip, width=0.8\textwidth]{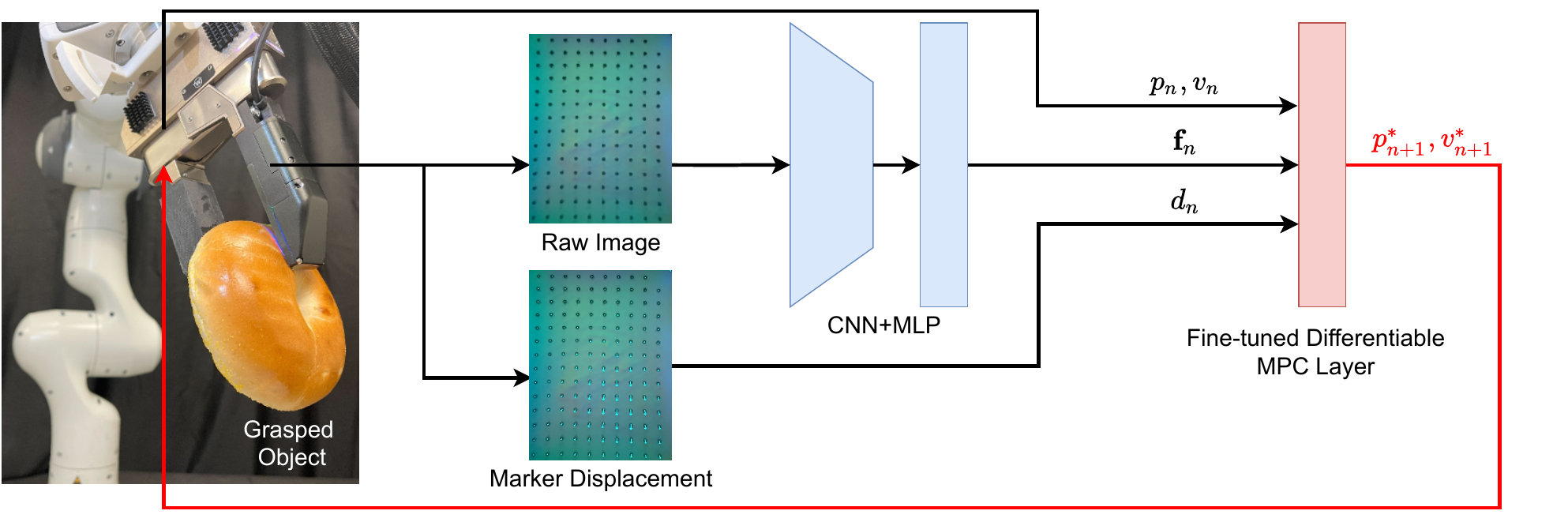}
\end{overpic}
\caption{Illustration of learned controller implementation. We implement the trained model as a real-time MPC controller. The proposed MPC layer combines model-based and data-driven formulations, allowing for the fine-tuning of the model-based part to achieve better control performance with real-time tactile feedback.}
\label{fig:controller_piplone}
\end{figure*}

\subsection{Controller Implementation} \label{subsec:imp}
{The model is trained using discrete data points, while the actual tactile feedback is continuous and real-time. This difference in data formats poses significant challenges in deploying a trained model as a real-time controller with optimal performance.} In our method, the proposed MPC layer combines both model-based and data-driven formulations, providing a way to fine-tune the model-based part for better control performance with real-time tactile feedback. To implement the trained model as a controller, we need to make three adjustments to the MPC layer, as shown in Fig.~\ref{fig:controller_piplone}. Firstly, we increase the weight coefficients $Q_v$ and $Q_a$ by multiplying the scalars $K_v$ and $K_a$, respectively, to increase the weights coefficients to speed up the controller's convergence. Secondly, we incorporate an additional model-based feedback signal, the marker displacement, to enhance the response speed to the tangential contact force. Finally, we add gripper motion saturation constraints in the optimization problem to ensure that control input and output are dynamically feasible. The implemented MPC controller is determined by solving the optimization problem

\begin{align}
\label{eq:mpc_controller_imp}
        \mathbf{a}_n^*=&\arg \min _{\mathbf{a}_n}          P\mathbf{f}_{n+N}^T \mathbf{Q}_f \mathbf{f}_{n+N}\notag\\
        &+
        PK_vQ_v (v_{n+N}-K_dd_{n+N})^2\notag\\
    &+\sum_{k=n}^{n+N-1}
        \mathbf{f}_{k}^T \mathbf{Q}_f \mathbf{f}_{k}+
        K_vQ_v (v_{k}-K_dd_k)^2+K_aQ_a a_{k}^2,\\
        &\text{subject to}~\eqref{eq:mpc_layer_tran}~\text{and}~\notag\\
        &\left[\begin{array}{c}
            p_{\text{min}} \\
           v_{\text{min}} \\
            a_{\text{min}}
            \end{array}\right]\leq
         \left[\begin{array}{c}
            p_n \\
            v_n \\
            a_n
            \end{array}\right]\leq
        \left[\begin{array}{c}
            p_{\text{max}} \\
           v_{\text{max}} \\
            a_{\text{max}}
        \end{array}\right]\label{eq:saturation},
\end{align}
where the scalar factor $K_d$ is used to regularize $d_n$, which represents the length of the sum of marker displacements after applying the compensation method introduced in \cite{ma2019dense}.  The scalars $K_v$ and $K_a$ are used to increase the weight coefficients for $v$ and $a$, respectively.  {In our implementation, by setting $K_v = 100$~and~$K_a = 2$, we are able to achieve good convergence ability and response speed for the deployed controller. Inequality \eqref{eq:saturation}} is the saturation constraint, where right subscripts min and max represent the lower bound and upper bound, respectively.

The reason we add constraints during the implementation of the controller instead of incorporating them during the learning phase is that an unconstrained MPC layer can better capture the features of different objects to build more applicable dynamic models. When utilizing the learned model as a controller, the application of constraints enables the entire model to better adapt to real-time tactile feedback and generate dynamically feasible motion.

\section{Baseline Methods}

In this section, we introduce three baseline methods we use in the experiments, namely proportional derivative (PD) control, MPC, and open-loop grasping.  

\subsection{Proportional Derivative Control}

Firstly, we define several control signals that can be extracted from tactile images. We denote the contact area of the grasped object with the tactile sensor as $c$. The contact area $c$ can be obtained by thresholding the depth image or the difference image, as illustrated in Figs.~\ref{fig:thresholding}~and~\ref{fig:diff_thresholding}. Additionally, we estimate the tangential contact force by tracking the marker displacements. We represent the length of the sum of marker displacements as $d$.  

It should be noted that $d$ and $c$ here are tactile signals with clear meanings, extracted by other modules, and low-dimensional. This is distinct from \(\mathbf{f}_{n}\) used in LeTac-MPC. \(\mathbf{f}_{n}\) is an embedding generated by the encoder, which lacks a specific explicit meaning. During end-to-end training, we anticipate that our model would learn the physical representations of grasped objects from tactile images, which is \(\mathbf{f}_{n}\).

We design the controller according to two objectives. Firstly, we aim to maintain a constant value $c_{\text{ref}}$ for the contact area $c$ reactively while grasping objects. Secondly, we aim to ensure a tighter grasping in the presence of external tangential contact force to prevent the object from slipping. To achieve these goals, we define the following PD control law:

$$v_n = K_P(c_n-c_{\text{ref}}-Q_dd_n)+K_D\dot{c_n}.$$

Where $K_P, K_D$ are gains for this controller and $Q_d$ is a scalar factor to regularize $d$. The parameters we use for PD are shown in Table~\ref{tab:pd}.

\subsection{Model Predictive Control}\label{sec:model_based}

\begin{table}[t]
\centering
\begin{tabular}{llllll}
\hline
   $c_{\text{ref}}$  & $Q_d$ & $K_P$   &$K_D$ & $\Delta t$&freq.   \\ \hline
     900    & 2    & $\frac{1}{4\times 10^4 \Delta t}$ & $\frac{1}{3.6\times 10^5 \Delta t}$ & $\frac{1}{60}$ & 60~Hz   \\ \hline
\end{tabular}
\caption{{Parameters of the PD controller.} }
\label{tab:pd}
\end{table}

\begin{table}[t]
\centering
\begin{tabular}{lllllllll}
\hline
   $c_{\text{ref}}$  & $Q_d$ & $P$   &$Q_c$ & N& $K_c$&$Q_v,Q_a$  &$\Delta t$&freq.   \\ \hline
     900    & 2    & 5 & 36   &10&$36000$ &1 & $\frac{1}{60}$ & 60~Hz   \\ \hline
\end{tabular}
\caption{{Parameters of the MPC controller.} }
\label{tab:mpc}
\end{table}

\begin{table}[t]
\centering
\begin{tabular}{lll}
\hline
   $p_{\text{min}}/p_{\text{max}}$ & $v_{\text{min}}/v_{\text{max}}$ & $a_{\text{min}}/a_{\text{max}}$   \\ \hline
     $0/70~\text{mm}$    & $-15/15~\text{mm/s}$    & $-100/100~\text{mm/}\text{s}^2$    \\ \hline
\end{tabular}
\caption{{Constraints of LeTac-MPC and MPC.} }
\label{tab:values_cons}
\end{table}

\begin{figure}[t!]
\centering
\begin{overpic}[trim=100 10 90 10,clip, width=0.48\textwidth]{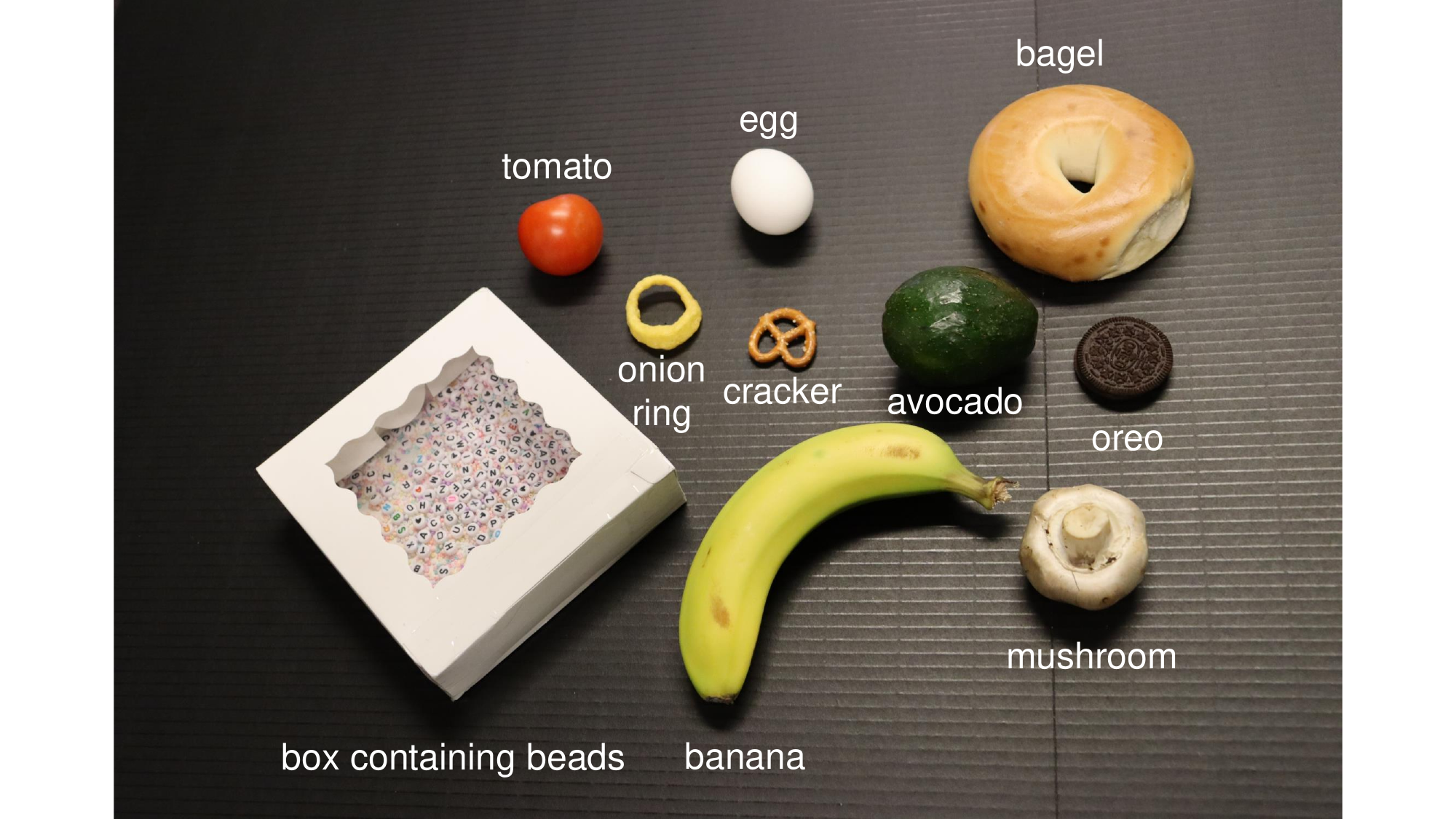}
\end{overpic}
\caption{Daily objects for grasping.}
\label{fig:daily_object}
\end{figure}

\begin{figure*}[ht!]
\centering
\begin{overpic}[trim=0 0 0 0,clip, width=0.85\textwidth]{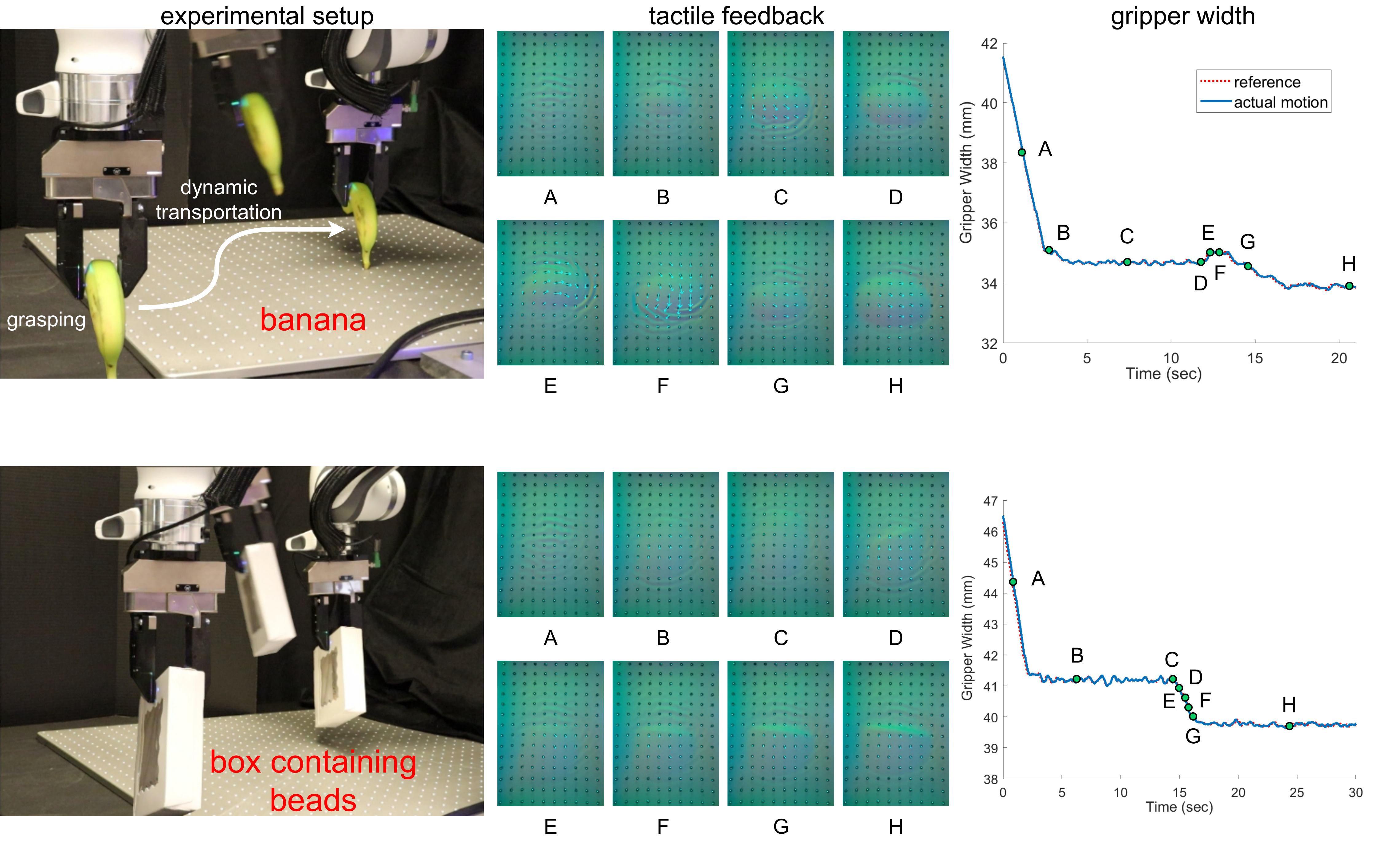}
\end{overpic}
\caption{Illustration of daily objects grasping and transportation. Left: experiment procedures. Middle: tactile feedback sequence. Right: gripper width curve. The dynamic trajectories we use for transportation are same for every object.}
\label{fig:banana}
\end{figure*}

\begin{figure*}[ht]
\centering
\begin{overpic}[trim=0 0 0 0,clip, width=0.75\textwidth]{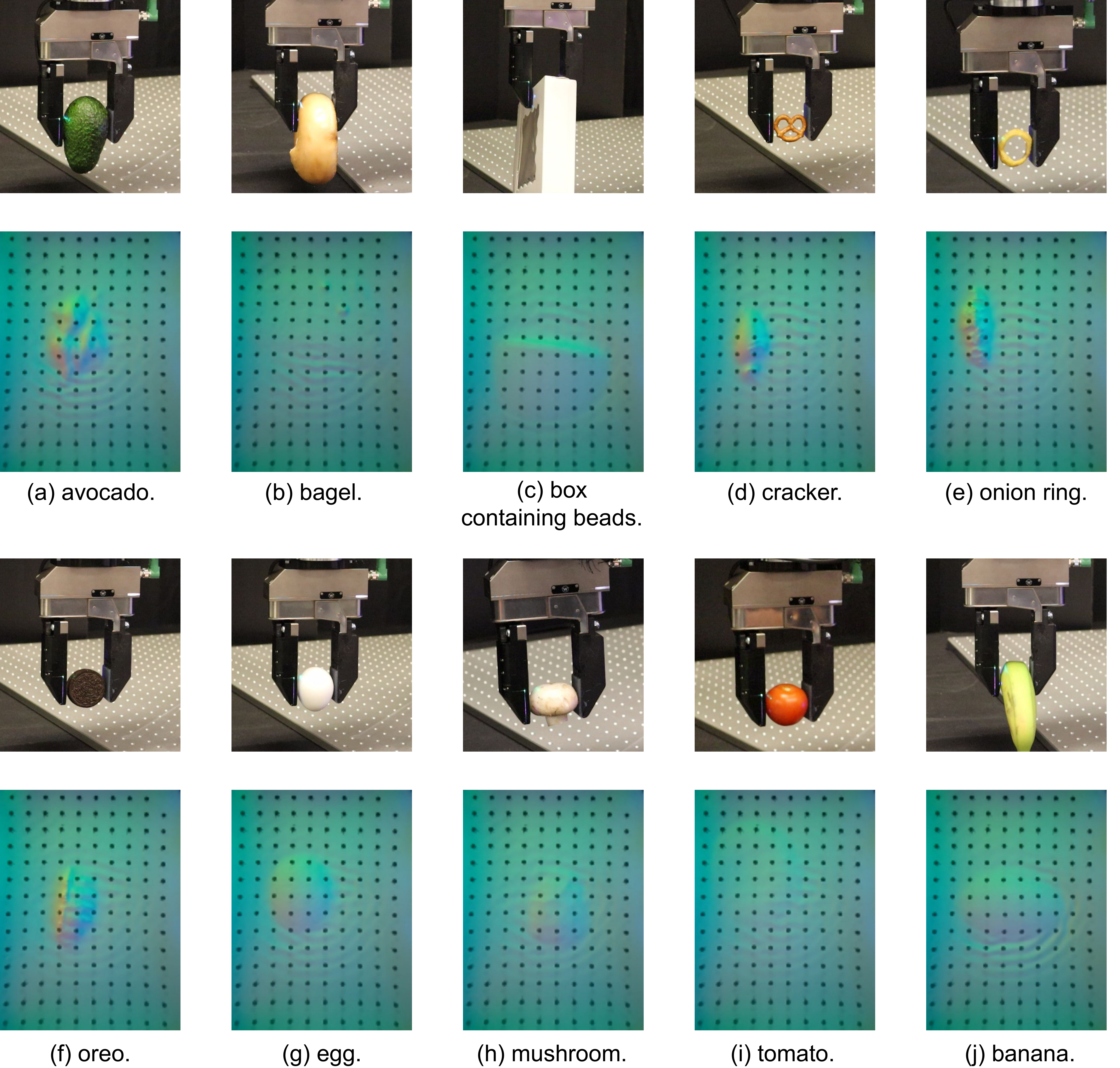}
\end{overpic}
\caption{Tactile images of daily objects grasping using LeTac-MPC.}
\label{fig:daily_grasping}
\end{figure*}

\begin{figure*}[ht]
\centering
\begin{overpic}[trim=0 0 0 0,clip, width=0.93\textwidth]{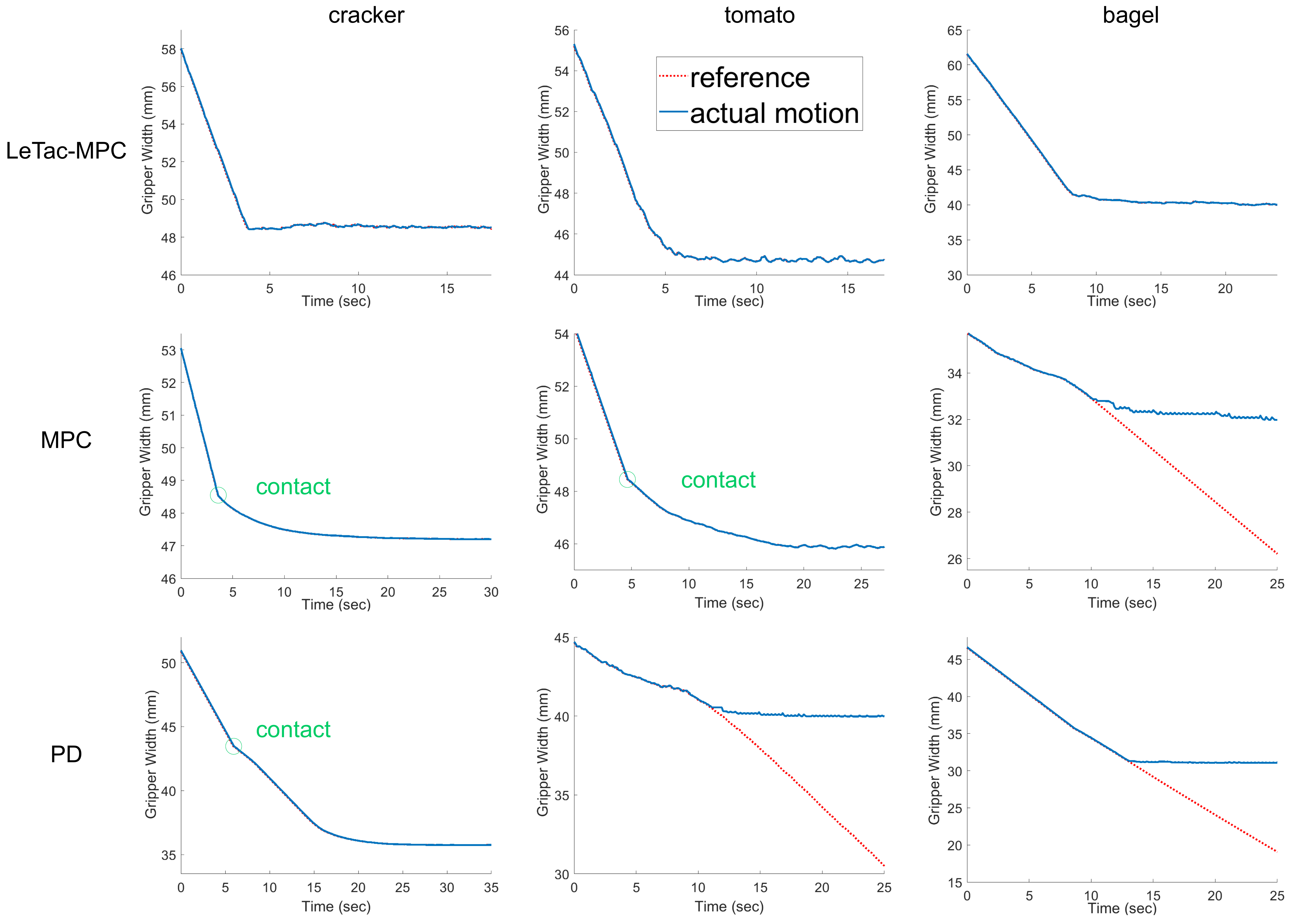}
\end{overpic}
\caption{Gripper width curves of different objects and different methods. }
\label{fig:grasping_curve}
\end{figure*}

\begin{table*}[ht]
\centering
\begin{tabular}{llllll}
\hline
          & avocado & bagel & box  & cracker & onion ring \\\hline
LeTac-MPC & \textbf{5.53}$\pm 1.48$~N& \textbf{5.32}$\pm 1.63$~N& \textbf{5.36}$\pm 1.57$~N& \textbf{4.44}$\pm 1.30$~N& \textbf{4.89}$\pm 1.34$~N \\
MPC       & 5.79$\pm 1.33$~N & \red{reach force limit} & 7.04$\pm 1.82$~N & 5.47$\pm 1.29$~N & 6.46$\pm 1.55$~N  \\
PD        & 5.96$\pm 1.40$~N & \red{reach force limit} & 7.85$\pm 2.00$~N & 6.88$\pm 1.50$~N & 5.83$\pm 1.37$~N  \\
open-loop & 10.22$\pm 2.02$~N & 7.11$\pm 1.85$~N      & \red{drop}        & 9.92$\pm 1.88$~N & \red{broken}  
\\\hline
 & oreo & egg & mushroom & tomato            & banana\\\hline
 LeTac-MPC & \textbf{4.67}$\pm 1.43$~N & 5.43$\pm 1.42$~N& \textbf{4.52}$\pm 1.51$~N& \textbf{5.02}$\pm 1.70$~N& \textbf{5.89}$\pm 1.68$~N\\
MPC        & 5.82$\pm 1.37$~N & 5.28$\pm 1.34$~N          & 5.31$\pm 1.40$~N & 7.42$\pm 1.66$~N       & 9.23$\pm 1.80$~N  \\
PD        & 6.69$\pm 1.46$~N & \textbf{4.53}$\pm 1.22$~N & 5.78$\pm 1.44$~N & \red{reach force limit} & 8.17$\pm 1.85$~N  \\
open-loop  & 7.78$\pm 1.89$~N    & 13.65$\pm 2.86$~N   & 7.02$\pm 1.75$~N        & 7.55$\pm 1.77$~N  & 7.35$\pm 1.83$~N  
\\\hline
\end{tabular}
\caption{Results of daily objects grasping and transportation experiment. If an experiment fails, we indicate the corresponding table position with red font to highlight the failed result. If the experiment succeeds, {we report the average value and standard deviation of the force during the experimental process after the grasping stabilizes.} The bolded font represents the minimum grasping force for the same object. {Here, the term ``egg" refers to an egg with its shell, which is actually rigid. In the paper, we use ``peeled boiled egg" to denote an egg without its shell.}}
\label{tab:force}
\end{table*}

\begin{figure*}[ht]
\centering
\begin{overpic}[trim=0 0 0 0,clip, width=0.8\textwidth]{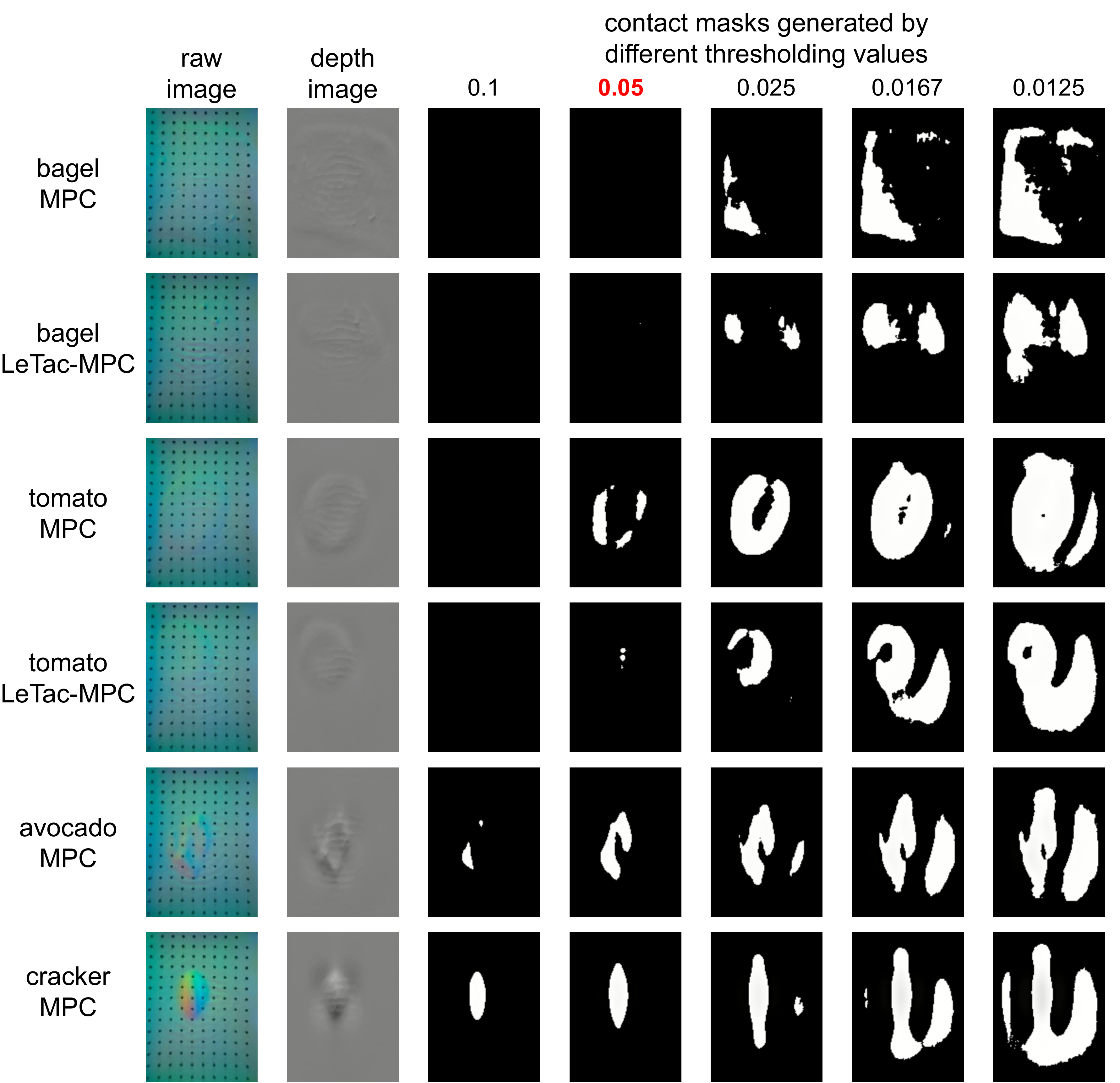}
\end{overpic}
\caption{Thresholding the depth image to extract the contact area. We show a sequence of results by thresholding with different values. 0.05 is the threshold we use for PD and MPC experiments. }
\label{fig:thresholding}
\end{figure*}

\begin{figure*}[ht]
\centering
\begin{overpic}[trim=0 0 0 0,clip, width=0.8\textwidth]{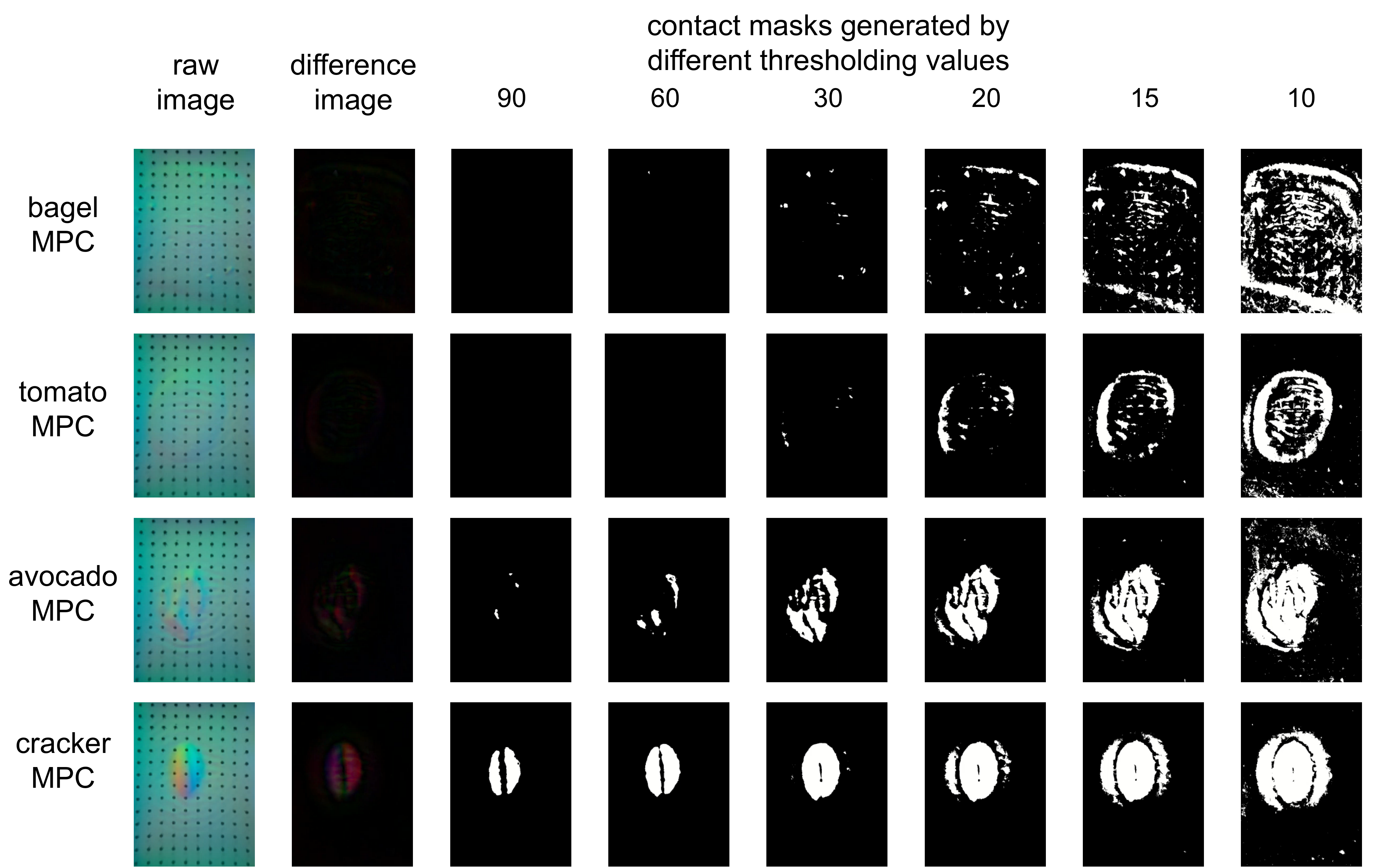}
\end{overpic}
\caption{Thresholding the difference image to extract the contact area. To get the difference image, we need to first interpolate the raw image and remove the markers.}
\label{fig:diff_thresholding}
\end{figure*}

\begin{figure*}[ht]
\centering
\begin{overpic}[trim=0 0 0 0,clip, width=0.8\textwidth]{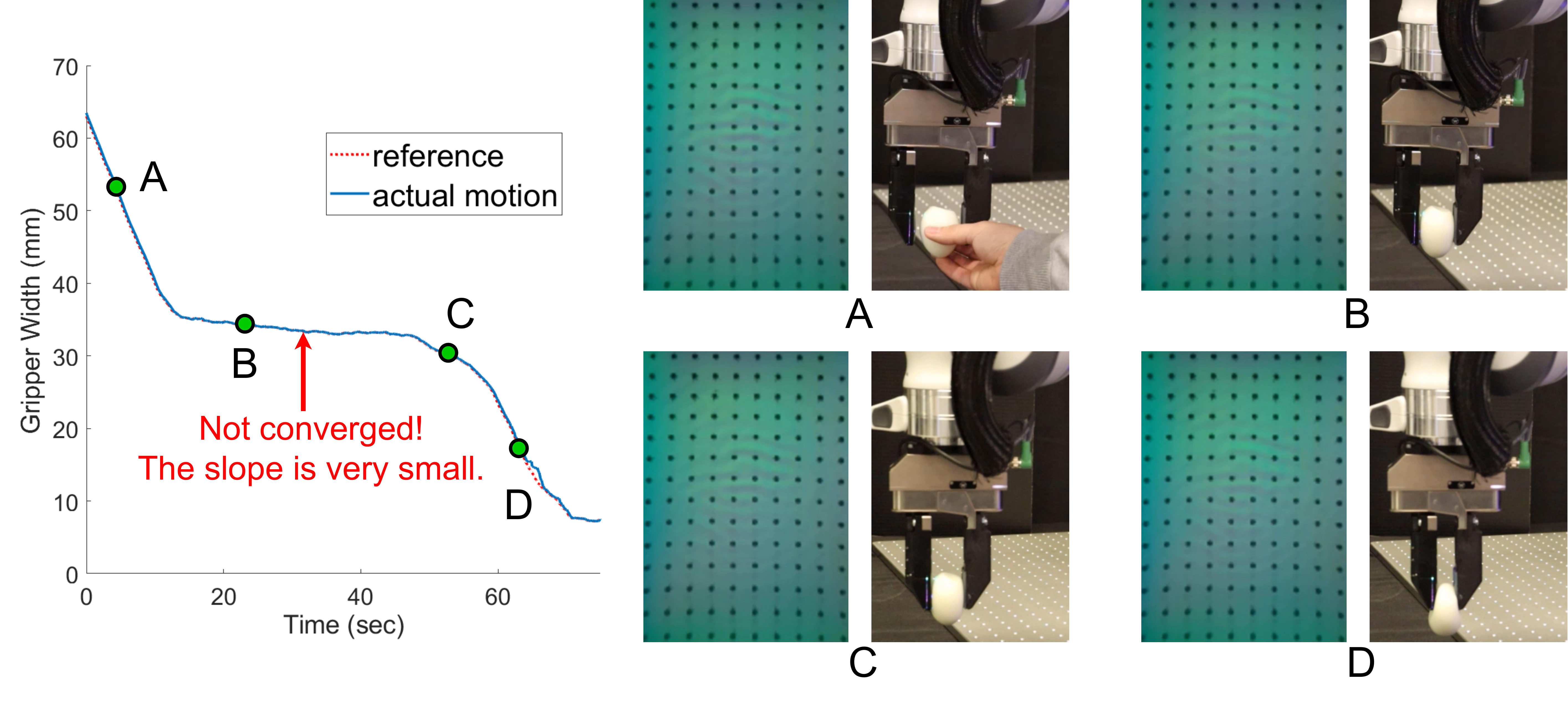}
\end{overpic}
\caption{Gripper width curves and tactile images of grasping a peeled boiled egg using LeTac-MPC. }
\label{fig:egg_fail}
\end{figure*}

\begin{figure*}[ht]
\centering
\begin{overpic}[trim=0 0 0 0,clip, width=0.9\textwidth]{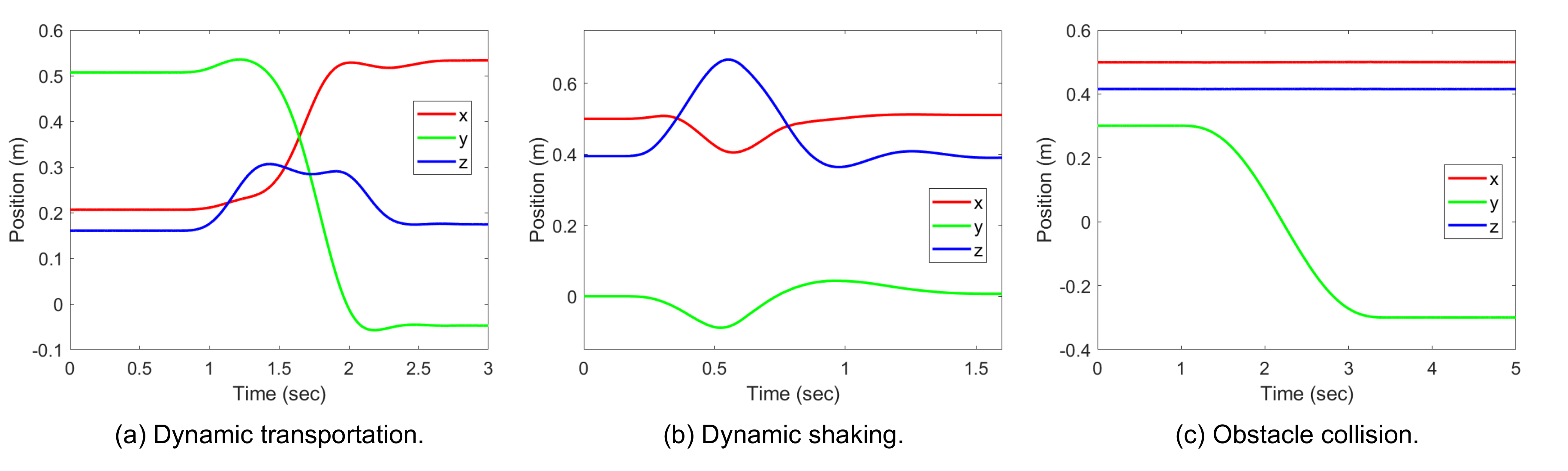}
\end{overpic}
\caption{{Position trajectories for the dynamic transportation (Section~\ref{sec:grasping}), dynamic shaking (Section~\ref{sec:shaking}), and obstacle collision (Section~\ref{sec:obs}) experiments. The maximum velocity of the trajectory for dynamic transportation is 1.40 m/s, with a maximum acceleration of 3.82 m/$\text{s}^2$. For dynamic shaking, the trajectory has a maximum velocity of 1.38 m/s and a maximum acceleration of 8.15 m/$\text{s}^2$. Lastly, the trajectory for obstacle collision has a maximum velocity of 0.47 m/s and a maximum acceleration of 0.65 m/$\text{s}^2$. } {This figure serves as a further supplement to the experiments, highlighting the dynamic nature and challenges of the trajectories involved.}}
\label{fig:traj}
\end{figure*}

\begin{figure*}[ht]
\centering
\begin{overpic}[trim=0 0 0 0,clip, width=0.9\textwidth]{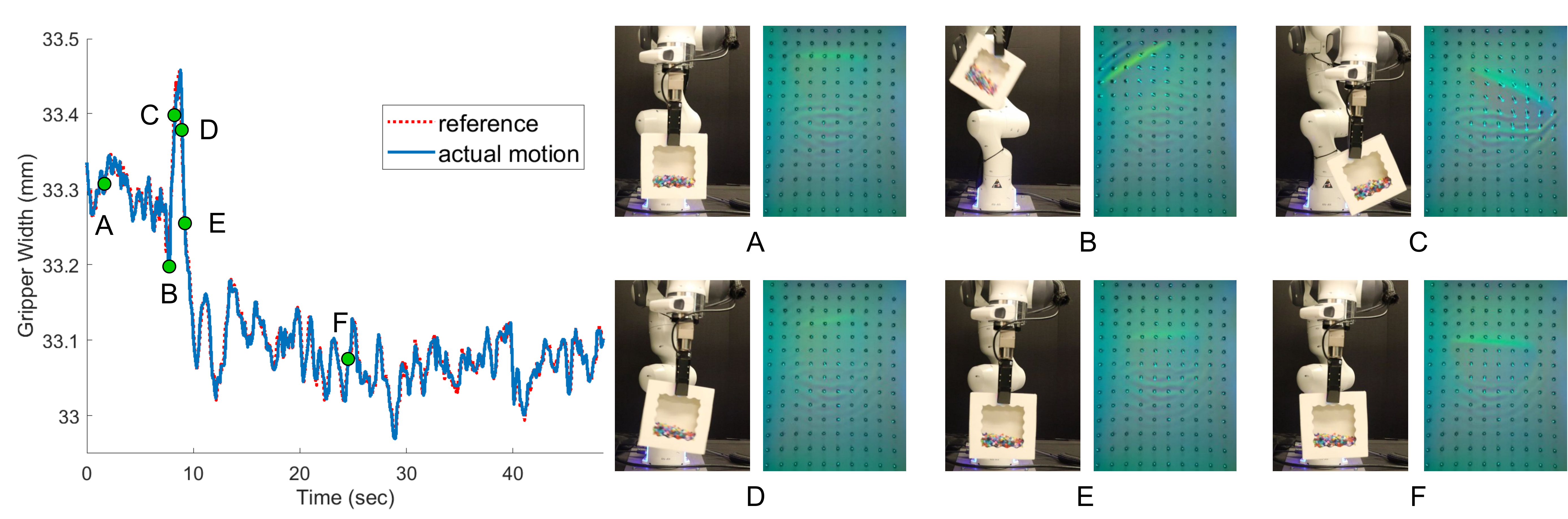}
\end{overpic}
\caption{Gripper width curve and tactile images of dynamic shaking a box containing beads with LeTac-MPC. }
\label{fig:nn_shaking}
\end{figure*}

\begin{figure*}[ht]
\centering
\begin{overpic}[trim=25 0 20 0,clip, width=0.99\textwidth]{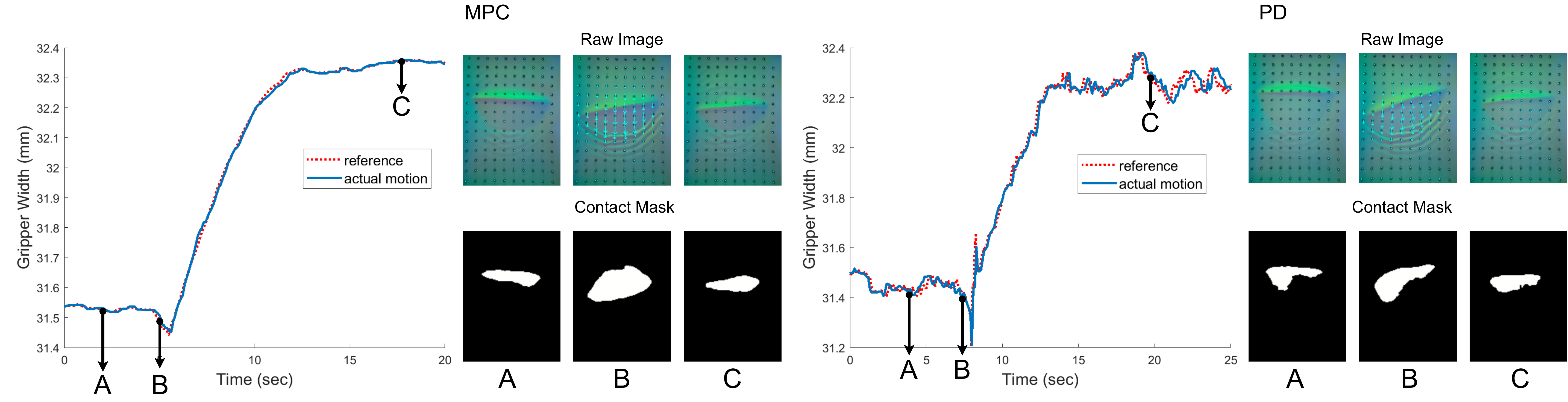}
\end{overpic}
\caption{Gripper width curves of dynamic shaking experiments with MPC and PD control. {We know that the gripper width and grasping force are negatively correlated (smaller gripper width means tighter grasping and larger grasping force). We can see from this figure that at the beginning of the experiments, the contact areas are too small compared to the raw images. Therefore, for MPC and PD, the initial converged grasping forces are actually larger than the proper grasping force.}}
\label{fig:mpc_pd_shaking}
\end{figure*}

\begin{figure}[ht]
\centering
\begin{overpic}[trim=85 80 75 65,clip, width=0.48\textwidth]{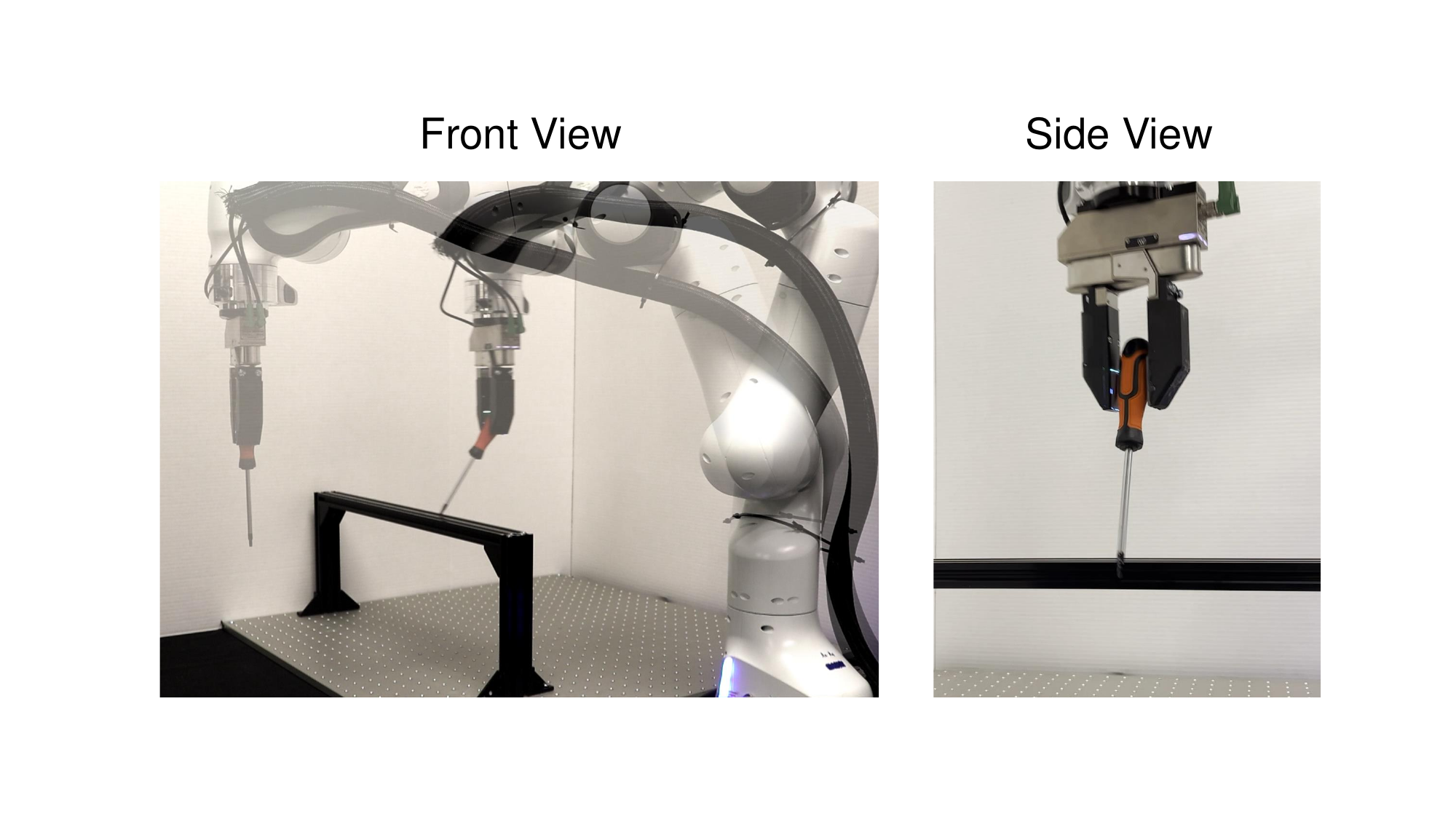}
\end{overpic}
\caption{Illustration of the obstacle collision experiment.}
\label{fig:ex3}
\end{figure}

\begin{figure*}[ht]
\centering
\begin{overpic}[trim=0 0 0 0,clip, width=0.99\textwidth]{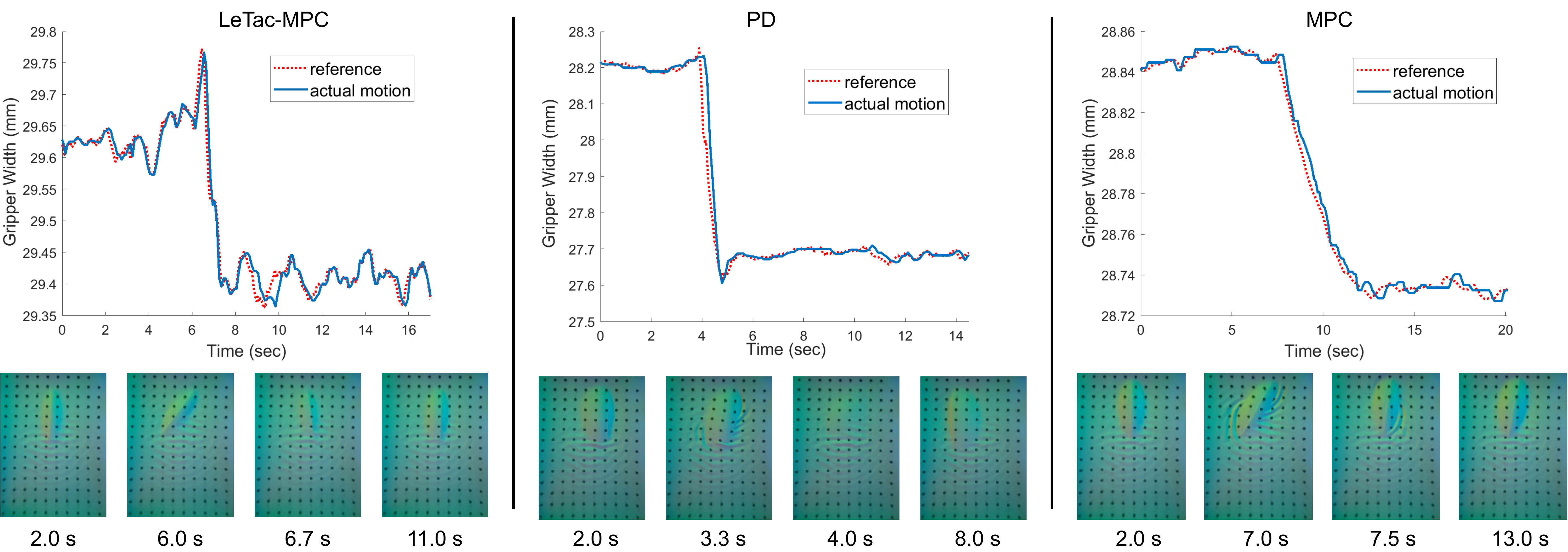}
\end{overpic}
\caption{Gripper width curves of the obstacle collision experiment.}
\label{fig:collision}
\end{figure*}

\begin{figure*}[ht]
\centering
\begin{overpic}[trim=0 0 0 0,clip, width=0.75\textwidth]{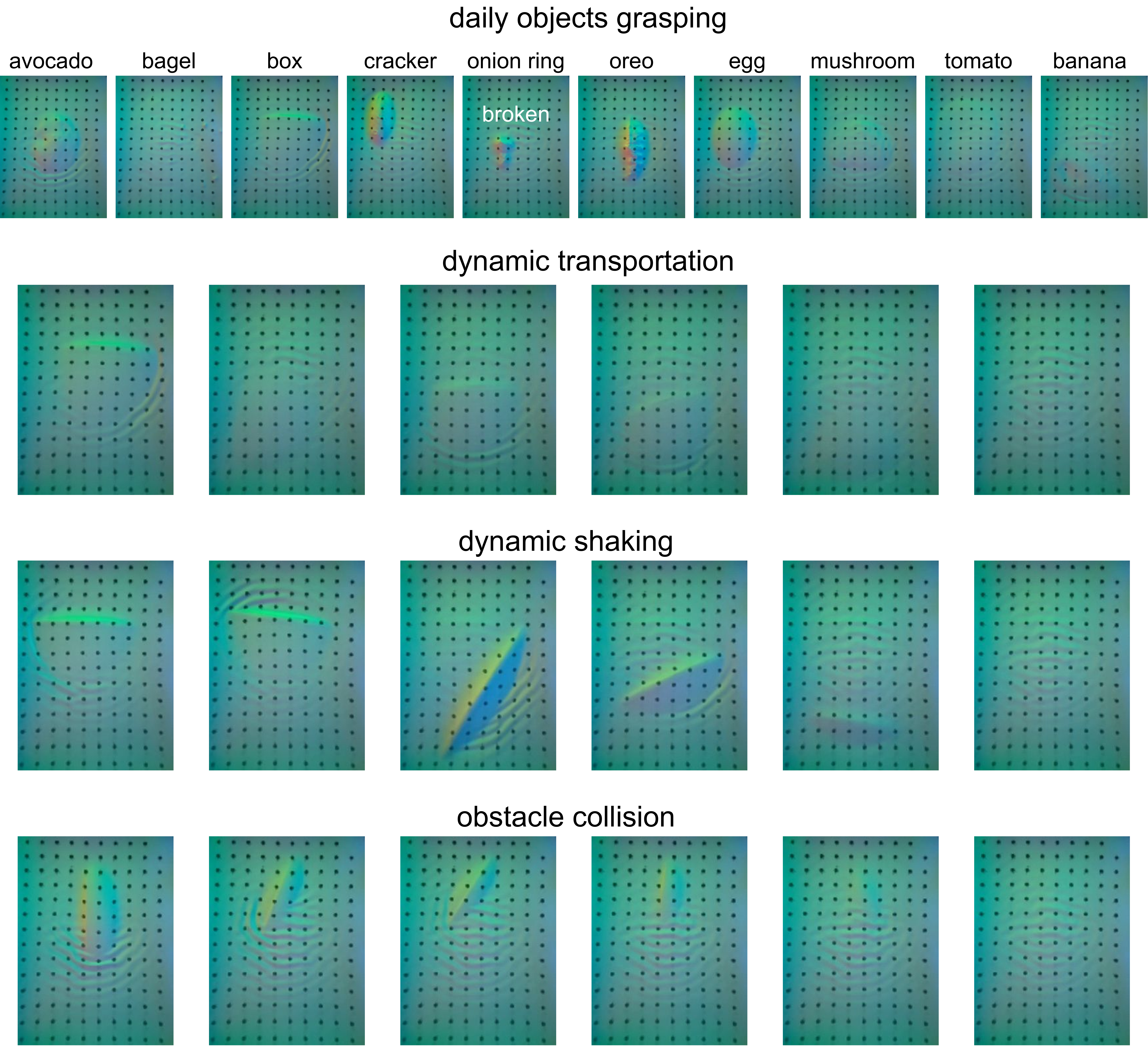}
\end{overpic}
\caption{Tactile image visualizations of open-loop grasping.}
\label{fig:openloop}
\end{figure*}

Firstly, we propose such linear assumption: 
\begin{align}
\label{eq:linear_assum}
c_{n+1} = c_n - K_c v_n \Delta t,
\end{align}
where $K_c$ is a scalar factor. Equation \eqref{eq:linear_assum} assumes a linear relationship between the contact area $c$ and the gripper width $p$, which is a simple and valid approximation of complex gel dynamics. The linear model is a local approximation for a very short duration. In MPC, the model is not used for long-term predictions, as this would lead to significant errors. Our proposed MPC baseline method runs at a high frequency with receding horizon control, ensuring that we can continuously iterate with the local model to minimize model errors. Combining equations \eqref{eq:gripper_model} and \eqref{eq:linear_assum}, we get the following model:

\begin{align}
\label{eq:mpc_model}
\left[\begin{array}{c}
            c_{n+1} \\
            p_{n+1}\\
            v_{n+1}   
            \end{array}\right] = \left[\begin{array}{cccc}
            1 & 0  &- K_c \Delta t\\    
            0  & 1 & \Delta t   \\
             0 & 0 & 1 
            \end{array}\right]
\left[\begin{array}{c}
            c_n \\
            p_n\\
            v_n   
            \end{array}\right]
+\left[\begin{array}{c}
            0 \\
            \frac{1}{2}\Delta t^2\\
            \Delta t   
            \end{array}\right]a_n.
\end{align}

Defining a feedback state vector $\mathbf{y}_n = [c_n,-d_n,v_n]^T$. and the length of prediction $N$, we write the same control objectives with PD as following cost function:
 
\begin{align}
J(\mathbf{y}_n,\mathbf{a}_n)
& = P\mathbf{e}_{n+N}^T{\mathbf{Q}}\mathbf{e}_{n+N} + \sum_{k=n}^{n+N-1} \mathbf{e}_{k}^T{\mathbf{Q}}\mathbf{e}_{k} +Q_a a_{k}^2,\label{eq:cost_function}\\
\text{where}~\mathbf{Q} &= \left[\begin{array}{ccc}
            Q_c & Q_dQ_c & 0 \\
            Q_dQ_c & Q_d^2Q_c & 0 \\
            0 & 0 & Q_v 
        \end{array}\right],\label{eq:q_matrix}\\
            \mathbf{e}_{n} &= \mathbf{y}_{n} - [c_{\text{ref}},0,0]^T,\label{eq:equilibrium}\\
            \mathbf{a}_n &= [
a_n,a_{n+1},\ldots{},a_{n+N-1}
]^T\in\mathbb{R}^{N}\notag.
\end{align}
In equation \eqref{eq:cost_function}, scalar $P$ is to amplify the terminal cost to speed up convergence. $Q_c, Q_v,~\text{and}~Q_a$ are weight coefficients. Equations \eqref{eq:q_matrix} and \eqref{eq:equilibrium} are derived from following control objective: $c_n $ converges to $ c_{\text{ref}}+Q_dd_n$ and $v_n$ converges to $0$.

Finally, we define a MPC law by solving following optimization problem:
\begin{align}
        \mathbf{a}_n^*&=\arg \min _{\mathbf{a}_n} J(\mathbf{y}_n,\mathbf{a}_{n}),\label{eq:opt_mpc}\\
        &\text{subject to}~\eqref{eq:mpc_model}~\text{and}\notag\\
        &\left[\begin{array}{c}
            p_{\text{min}} \\
           v_{\text{min}} \\
            a_{\text{min}}
            \end{array}\right]\leq
         \left[\begin{array}{c}
            p_n \\
            v_n \\
            a_n
            \end{array}\right]\leq
        \left[\begin{array}{c}
            p_{\text{max}} \\
           v_{\text{max}} \\
            a_{\text{max}}
        \end{array}\right].\notag
\end{align}

The optimization problem~\eqref{eq:opt_mpc} is a QP. Unlike the differentiable MPC layer, we do not need to do backpropagation for the optimization problem~\eqref{eq:opt_mpc}. Here, we only need to solve the optimization problem~\eqref{eq:opt_mpc} to compute the control inputs. Therefore, we use OSQP \cite{osqp}, an efficient QP solver to solve the optimization problem~\eqref{eq:opt_mpc} in real-time. The parameters we use for MPC are shown in Tables~\ref{tab:mpc}~and~\ref{tab:values_cons}.

\subsection{Open-loop Grasping}
For open-loop grasping, we use the force feedback of the WSG 50-110 gripper as the signal to choose the gripper width. We select 10~N as the threshold value. At the beginning of the grasping, the gripper width decreases, and once the force feedback is greater than 10~N, the gripper stops. Then the gripper width will remain in this position during the period of performing the task.

{We select 10~N as the threshold value after testing and adjusting it through grasping and dynamic transportation of the objects shown in Fig.~\ref{fig:daily_object} (see Section~\ref{sec:grasping} for more details).  10~N is suitable for most objects, as increasing the threshold could cause damage to fragile objects, while decreasing it may result in heavy objects dropping during dynamic transportation.}

It is important to note that even when we use 10~N as the static condition for the fingers in open-loop grasping, it does not imply that open-loop grasping can maintain a constant grasping force of 10~N. This variability is in part due to the inherent noise in the force feedback of the WSG gripper. Furthermore, when the force feedback reaches 10~N, the fingers are in motion, and the force exerted differs once they stop. Additionally, the final resting force varies with the shape and properties of the grasped object, as shown in Table~\ref{tab:force}.

\section{Experiments}\label{sec:exp}
We conduct experimental validation of proposed LeTac-MPC on a WSG 50-110 parallel gripper and a Franka Panda manipulator, as shown in Fig.~\ref{fig:firstPage}. We run LeTac-MPC at a frequency of 25~Hz. Using the official WSG 50 ROS package\footnote{https://github.com/nalt/wsg50-ros-pkg}, the maximum rate of the gripper is 30~Hz. Therefore, the gripper can track the motion generated by LeTac-MPC with reasonable saturation constraints. We perform three tasks, namely daily objects grasping and transportation (Section~\ref{sec:grasping}), dynamic shaking (Section~\ref{sec:shaking}), and obstacle collision (Section~\ref{sec:obs}), and compare LeTac-MPC with PD control, MPC, and open-loop grasping. To capture the tactile feedback, we mount a GelSight on one of the fingers of the WSG 50-110 gripper. A video includes these three tasks and comparisons with the baseline methods can be found in the supplemental material. 

The WSG gripper we use has a built-in low-level controller, which can track position reference.  Our designed reactive grasping controllers are high-level controllers. These high-level controllers produce reference motions for the gripper fingers with reasonable saturation constraints, which are then accurately tracked by the low-level controller. In the case of LeTac-MPC and MPC, the solution to the optimization problem is acceleration $a_n$, but our state also includes gripper width $p_n$ and velocity $v_n$. Eventually, we use $p_n$ from the state as the reference sent to the low-level controller for tracking. Although we do not directly track $v_n$ and $a_n$, since $p$ is derived from the integration of $v$ and $a$, what the low-level controller effectively tracks is the entire trajectory of the finger motion. As for the PD controller, at each moment, we calculate $v_n$ through feedback, which, after integration, gives us the position reference $p_n$ to be tracked by the low-level controller. Hence, the low-level controller actually tracks both $p$ and $v$.

\subsection{Daily Objects Grasping and Transportation}\label{sec:grasping}
We evaluate our proposed LeTac-MPC method and compare it with three baseline methods for a daily objects grasping and transportation task. As shown in Fig.~\ref{fig:daily_object}, we select 10 daily objects. These objects differ in physical properties (such as stiffness, total weight, mass distribution, and coefficient of friction), sizes and shapes (ranging from larger to smaller objects), and surface textures (from smooth surfaces to different complex textures). Some objects are also fragile (e.g. onion ring, egg, and cracker). All of these present challenges to the generalization and robustness of grasping. The position of dynamic transportation is shown in Fig.~\ref{fig:traj}(a). We use this trajectory for evaluating all objects and methods. The only exception is the box containing beads, which undergoes a relative displacement along the Z-axis to accommodate its larger size. Objects are handed to the gripper by someone to facilitate grasping. Among evaluating for different methods, each object will be grasped with the same configuration. 

The task involves grasping different daily objects, each with different sizes, shapes, physical properties, and surface textures, and transporting them to another location. The task presents three main challenges. First, the gripper must apply the appropriate force to grasp the object without damaging it but with enough force to hold it stably. Second, during the dynamic transportation of the object, the gripper needs to reactively adjust its behavior to maintain the object stably, particularly for heavier objects and objects with varying mass distribution. Third, the task requires generalizing feasible grasping to different objects with different physical properties, shapes, sizes, and surface textures.

Based on the experimental results, our proposed LeTac-MPC overcomes these challenges and outperforms the baseline methods. More detailed explanations of the experimental results are provided below:

1) The detailed results of the implementation of LeTac-MPC in banana and box containing beads are shown in Fig.~\ref{fig:banana}. When the object was not in contact with the gripper, the gripper width decreased due to the empty tactile feedback. When the object came into contact with the gripper, the gripper width gradually converged to a position based on the tactile feedback from the GelSight. After the controller converged, the manipulator dynamically transported the object to another location, and the tactile feedback changed due to rapid changes in the states the grasped object. Correspondingly, LeTac-MPC would enable the gripper to reactively re-grasp the object to stabilize it in hand. Finally, after the object was transported to another location and became static, the gripper width converged. 

Note that the contraction response of the gripper in the box experiment is faster than that of the banana. This is because the inertia of the box containing beads changes significantly during dynamic transportation, requiring a faster response to maintain its stability.

2) Fig.~\ref{fig:daily_grasping} shows the tactile images of all 10 daily objects in the experiments. Table~\ref{tab:force} shows the results and mean grasping forces of the experiments. Our proposed LeTac-MPC is able to generalize to different objects with different physical properties, sizes, shapes, and surface textures, as shown in Fig.~\ref{fig:daily_grasping}. {Items such as cracker (Fig.~\ref{fig:daily_grasping}(d)), onion ring (Fig.~\ref{fig:daily_grasping}(e)), and oreo (Fig.~\ref{fig:daily_grasping}(f)) are smaller than the size of the gripper and also have complex textures and cluttered contours.} The training dataset only includes four standardized blocks with different physical properties. This indicates a good generalizability of LeTac-MPC. In Fig.~\ref{fig:grasping_curve}, we show the gripper width curves of several experiments.

3) As shown in Figs.~\ref{fig:daily_grasping}(b)~and~\ref{fig:daily_grasping}(i), although the tactile feedback for soft objects such as bagel and tomato is subtle, LeTac-MPC can still grasp robustly. However, both PD and MPC controllers rely on the contact area as the feedback signal to compute control inputs, and therefore cannot function on soft objects that have subtle tactile feedback. This is because it is difficult to obtain a high-quality and stable contact area from tactile feedback for soft objects. There are two popular ways to extract the contact area mask of the tactile image, thresholding depth images (Fig.~\ref{fig:thresholding}) and difference images (Fig.~\ref{fig:diff_thresholding}). The depth image is the 3D reconstruction of the contact surface and can be obtained by the method in \cite{wang2021gelsight}. The difference image is obtained by subtracting the current frame image from the first frame image, with marker removal achieved by interpolation at the marker locations on both the current and the first frames.

We collect multiple tactile images for different objects in the experiment and present a sequence of results obtained by thresholding depth images and difference images by different values. As seen in Figs.~\ref{fig:thresholding}~and~\ref{fig:diff_thresholding}, the same thresholding value cannot be generalized to different objects with different physical properties. When the value is small, we cannot obtain a reasonable contact area for deformable objects. However, when the value is large, the contact area of rigid objects becomes larger than the reasonable area and has more noises. Furthermore, for bagel, regardless of how we adjust the value, the contact area always appears poor for computing control input. This is because the bagel is very soft, resulting in very subtle tactile feedback. The raw images are from MPC experimental records. This explains the results in Table~\ref{tab:force}. For PD, since it utilizes the same feedback signals and control objectives as MPC, this can also explain the experimental results of PD.

In our implementation, we use the contact area from depth images as the feedback signal for MPC and PD. Additionally, 0.05 is the threshold that we use. {As shown in Fig.~\ref{fig:thresholding}, 0.05 is the optimal threshold for rigid objects with textures and without textures based on our experiments in Fig.~\ref{fig:thresholding}.} {In Fig.~\ref{fig:thresholding}, we can see that using 0.025 as the threshold introduces noise in non-contact areas for hard objects such as cracker and avocado (visible on the right side), and the contact area for cracker becomes elongated, which does not match the actual ground truth. Although tomato shows improved contact area quality at 0.025, the quality for bagel remains poor. On the other hand, a 0.05 threshold performs better for rigid objects (cracker and avocado). Therefore, we chose 0.05, which generally works for rigid objects. }

The poor contact area signal leads to poor grasping behavior. As shown in Fig.~\ref{fig:grasping_curve}, MPC and PD for soft object grasping continuously decreased the gripper width until it reached the force limit of the WSG 50-110 gripper, as the large force applied by the gripper cannot create a sufficient contact area mask. However, LeTac-MPC performs exceptionally well in grasping deformable objects, even when tactile features are subtle.  

Note that for tomato grasping in Fig.~\ref{fig:grasping_curve}, MPC succeeded while PD did not. This does not necessarily mean that MPC outperformed PD in grasping this object. As shown in Fig.~\ref{fig:thresholding}, the contact area in the MPC experiment is also very unsatisfactory, which results in MPC convergence, but with a large gripping force applied. Both PD and MPC calculate control inputs based on the contact area, which makes both unsuitable for grasping deformable objects. The difference in the experimental results for grasping tomato can be attributed to other minor factors, such as the grasping position of the tomato.

4) {Table~\ref{tab:force} shows that LeTac-MPC demonstrates statistically smaller grasping forces for most objects and performs the optimal on the grasping and transportation task. For very soft objects, such as tomato and bagel, PD and MPC tend to apply excessive grasping forces. Often, these forces reach the gripper's force limit.} For open-loop grasping with large grasping force, it still fails to grasp the box containing beads stably during dynamic transportation. This is because the simple open-loop policy cannot re-grasp properly when the grasped object's state changes suddenly. We can see the tactile image sequence of dropping the box in Fig.~\ref{fig:openloop}. {Moreover, for certain objects in which the force applied during open-loop grasping is much greater than that in LeTac-MPC, we can see differences between tactile images Figs.~\ref{fig:daily_grasping}~and~\ref{fig:openloop}. For example, by comparing Figs.~\ref{fig:daily_grasping}~and~\ref{fig:openloop}, it is evident that the tactile image of the egg in Fig.~\ref{fig:openloop} has a larger contact area and clearer features, indicating that the force applied in open-loop grasping is greater.}

5) In our implementation, due to the slow convergence rate of PD and MPC, we set the gripper to contract at a constant speed of 2.5~mm/s when there is no contact with the object. When the gripper is in contact with the object, we switch to the PD and MPC controllers. As shown in Fig.~\ref{fig:grasping_curve}, the gripper width curves generated by PD and MPC both have a turning point (green circle), which represents the gripper width at the point of contact with the object. We can observe that after the turning point, PD and MPC gradually converge, but their convergence rate is slow. For LeTac-MPC, we do not set a constant contraction speed when the gripper is not in contact with an object. Instead, we apply LeTac-MPC to generate gripper motion throughout the grasping process. This is because LeTac-MPC has a faster convergence rate than PD and MPC, no matter if the gripper is in contact with the object, as shown in Fig.~\ref{fig:grasping_curve}.

6) We can observe that the fluctuation after convergence of PD and MPC is smaller than that of LeTac-MPC. This is because the output of the NN has some randomness, but the output of the model-based controllers is more stable. However, as shown in Figs.~\ref{fig:banana}~and~\ref{fig:grasping_curve}, the randomness in the output of our proposed network model does not affect grasping performance.

\subsection{Extreme Case: Grasping a Peeled Boiled Egg}

In this section, we use LeTac-MPC to grasp a peeled boiled egg, which is extremely soft and delicate. In fact, this experiment is not successful, but shows an extreme case of LeTac-MPC.

The experimental visualizations and gripper width curves are shown in Fig.~\ref{fig:egg_fail}.  We can see that, during the entire grasping process, the changes in tactile images are almost imperceptible to the human eye. This is because comparing with the elastomer of GelSight, the peeled boiled egg is much more softer. So, no matter how much grasping force is applied to the peeled biled egg, it cannot make enough deformation on the elastomer of GelSight.

When the elastomer touches the egg for the first time, there are some subtle features that show up in the tactile image. Therefore, the speed at which the gripper width decreases begins to vary, and it appears that the controller is trending toward convergence (point B in Fig.~\ref{fig:egg_fail}). However, it is important to note that the controller does not actually converge; instead, the gripper width decreases very slowly with a minimal slope (point B in Fig.~\ref{fig:egg_fail}). This is because, for grasping other daily objects in Fig.~\ref{fig:daily_object}, when the tactile image has features, LeTac-MPC will slowly decrease the gripper width and tend toward convergence. During this slow decrease, as the grasping force increases, the features of the tactile image become more pronounced, leading LeTac-MPC to gradually converge. However, as we mentioned, since a peeled boiled egg is too soft, no matter how much force is applied to it, its tactile features do not become stronger. Therefore, the gripper width continues to decrease at a very slow rate, and LeTac-MPC never converges. This ultimately leads to the egg being damaged (points C and D in Fig.~\ref{fig:egg_fail}).

\subsection{Dynamic Shaking}\label{sec:shaking}

In this experiment, we have the gripper grasp a box containing beads and shake it violently to explore how LeTac-MPC and baseline methods behave to prevent the box from falling under dynamic shaking. During dynamic shaking, the beads inside also shake, resulting in unpredictable state changes in the grasped box. The position of the end-effector during dynamic shaking is shown in Fig.~\ref{fig:traj}(b). The maximum acceleration can reach to 8.15~m/$\text{s}^2$. This makes the task very challenging.

Fig.~\ref{fig:nn_shaking} shows the results of LeTac-MPC. We can see that during the shaking process, the box shook in the hand due to rapid changes in its acceleration and inertia. Correspondingly, the gripper re-grasped based on the tactile feedback and eventually converged. Throughout the entire process, the box did not drop even though it shook in the hand. The focus of the experiment result is on whether the box falls or not. According to Fig.~\ref{fig:nn_shaking}, due to the rapid change in the state of the box, minor slips are common and we cannot control the slip distance. However, due to the rapid response of LeTac-MPC, it could quickly re-grasp the object and converge to a new grasping width to prevent it from falling.

In contrast, for open-loop control, we observe that the box easily drops because there is no reactive behavior in open-loop grasping, as shown in Fig.~\ref{fig:openloop}.

For MPC and PD, we observe that they have similar results, as shown in Fig.~\ref{fig:mpc_pd_shaking}. We can see that, at the beginning of the experiments, the contact areas are too small compared to the raw images. Therefore, for MPC and PD, the initial convergent grasping forces are actually larger than the proper grasping force. We can also obtain this conclusion by comparing the tactile images of PD and MPC in Fig.~\ref{fig:mpc_pd_shaking} with the tactile images of LeTac-MPC in Fig.~\ref{fig:nn_shaking}. In the shaking process, the gripper width first decreased due to the sudden change in the state of the grasped object, as seen in Fig.~\ref{fig:mpc_pd_shaking}. Then, for both PD and MPC, the gripper width increased and converged to a new larger position value, indicating that the initial grasping forces are too large. Since the performance of model-based control methods like PD and MPC rely on the quality of tactile feedback, their generalizability is lower than that of LeTac-MPC.

We conducted 10 repetitions of the experiment for each method and calculated the success rate, as shown in Table~\ref{tab:rate}.  We can see that the success rate of the reactive control methods (LeTac-MPC, MPC, PD) is significantly higher than that of the open-loop grasping. This is because PD and MPC tend to apply greater force due to the low-quality contact area feedback for the box, as shown in Fig.~\ref{fig:mpc_pd_shaking}. Additionally, by comparing the tactile images in Figs.~\ref{fig:nn_shaking}~and~\ref{fig:mpc_pd_shaking}, it can be observed that while PD and MPC achieve higher success rates, LeTac-MPC maintains a relatively high success rate while applying significantly less force.

In the daily objects grasping and transportation experiment, we demonstrate that LeTac-MPC's ability to reactively grasping to a variety of daily objects, since transportation is also a dynamic task. Additionally, this experiment also showcases both PD and MPC exhibit poor generalization across different objects. If a range of diverse objects were introduced in the dynamic shaking experiment, PD and MPC would struggle to grasp some of them effectively, particularly the soft ones. On the other hand, open-loop grasping lacks the capability for reactive grasping. In general, we anticipate that LeTac-MPC will outperform these baseline methods in such scenario. This can also be analogously applied to the obstacle collision experiment.

In Fig.~\ref{fig:nn_shaking}, there is an increase to the gripper width. When shaking occurs, the state and in-hand configuration of the object may change. In such cases, the force exerted on GelSight may increase, resulting in changes in the tactile image. Consequently, the controller will increase the gripper width to mitigate this impact. This also could happen in dynamic transportation and obstacle collision experiments. However, the state of the grasped object is constantly changing. If the force exerted on GelSight decreases, the LeTac-MPC will reduce the gripper width to ensure stable grasping. The entire process is dynamic and based on tactile image feedback. Consequently, we can observe that the gripper width is sharply decreased and converges to a new position.

\subsection{Obstacle Collision}\label{sec:obs}

In this section, we present the results of the experiment that tests the effect of collisions on LeTac-MPC and the baseline methods. In this experiment, we have the robot grasp a screwdriver and then collide with an obstacle, as shown in Fig.~\ref{fig:ex3}. This unexpected collision may occur during some robot manipulation tasks. The ideal grasping controller should be able to resist this type of external impact.

In this experiment, we chose a screwdriver as the grasped object, because it is rigid, has a regular shape, and lacks surface texture, all of which make it easily grasped by the three baseline methods (PD, MPC, and open-loop). Otherwise, as demonstrated in the daily objects grasping and transportation experiment, many objects are challenging for PD and MPC to grasp properly and robustly. Using this grasp-feasible object, we further focus on assessing the impact of obstacle collision.

As shown in Fig.~\ref{fig:collision}, in the LeTac-MPC, PD, and MPC experiments, the screwdriver became loose in the hand due to the collision, resulting in the gripper rapidly contracting to re-grasp the screwdriver. Depending on the specific grasping and collision situation, reactive grasping controllers re-grasp with corresponding contracting distances and velocities. Therefore, all three methods demonstrate reactive behavior in response to this type of external impacts. On the contrary, as shown in Fig.~\ref{fig:openloop}, open-loop grasping cannot handle this type of external impact. When a collision occurs, the grasped object always easily drops. We conducted 10 repetitions of the experiment for each method and calculated the success rate, as shown in Table~\ref{tab:rate}.  We can see that the success rate of the reactive control methods (LeTac-MPC, MPC, PD) is significantly higher than that of the open-loop grasping.

It is worth noting, as mentioned previously, that LeTac-MPC exhibits higher fluctuations after convergence compared to PD and MPC. However, these fluctuations are acceptable. As shown in Figs.~\ref{fig:nn_shaking}~and~\ref{fig:collision}, the fluctuation range is around $\pm 0.05$~mm. As demonstrated in the supplementary video,  under such minor fluctuations, changes in the tactile image are virtually indiscernible. Conversely, during dynamic shaking or when the controller reconverges after the grasped object collides with an external obstacle, the changes in the tactile image become quite visible. The differences between these two indicate that within the range of $\pm0.05$ mm, the grasp state remains almost constant.

Additionally, the LeTac-MPC is more sensitive to changes in tactile feedback and has a faster response speed. As shown in Fig.~\ref{fig:collision}, at the onset of collision, the tactile features became more pronounced, resulting in stronger feedback. At this point, the LeTac-MPC released the gripper a little bit to maintain a reasonable grasping force. However, with the change of tactile feedback, the gripper quickly contracted to stabilize the screwdriver.

\section{Discussion and Future work}

In this paper,  we propose LeTac-MPC, a learning-based model predictive control for tactile-reactive grasping to address the challenges of robotic tactile-reactive grasping for objects with different physical properties and dynamic and force-interactive tasks. The proposed approach features a differentiable MPC layer to model the embeddings extracted by the NN from the tactile feedback, enabling robust reactive grasping of different daily objects. We also present a fully automated data collection pipeline and demonstrate that our trained controller can adapt to various daily objects with different physical properties, sizes, shapes, and surface textures, although the controller is trained with 4 standardized blocks.  We perform three tasks: daily objects grasping and transportation, dynamic shaking, and obstacle collision, and compare 4 methods: LeTac-MPC, MPC, PD, and open-loop grasping. Through experimental comparisons, we show that LeTac-MPC has optimal performance on dynamic and force-interactive tasks and optimal generalizability. 

The fundamental idea of our LeTac-MPC lies in learning a unified representation and employing the differentiable MPC layer to utilize it. This approach enables generalization to a variety of daily objects, even when trained on a simplistic dataset. As clearly demonstrated in Fig.~\ref{fig:daily_grasping}, the tactile images of each testing object show a significant divergence from the training objects. This ability to generalize to unseen objects is a challenge for other learning-based grasping methods, particularly those based on classification, to achieve. It should be noted that LeTac-MPC does not explicitly output various physical properties of the grasped object (such as stiffness). Instead, it implicitly extracts these grasp-related properties and then utilizes these implicit features through the MPC layer to generate grasping actions.

\begin{table}[t]
\centering
\begin{tabular}{lllll}
\hline
                   & LeTac-MPC & MPC   & PD    & open-loop \\ \hline
dynamic shaking    & 8/10      & 10/10 & 10/10 & 2/10      \\
obstacle collision & 10/10     & 10/10 & 10/10 & 3/10      \\ \hline
\end{tabular}
\caption{Success rate of the dynamic shaking experiment and the obstacle collision experiment. }
\label{tab:rate}
\end{table}

For slippery object, as long as the object is not extremely slippery, LeTac-MPC can effectively achieve reactive grasping. In our experiments, items such as a box containing beads, a tomato, and an egg, which are all slippery, are successfully addressed by LeTac-MPC. However, this is based on the assumption that the surface of the grasped object is not extremely slippery. In such cases, the tangential force can still cause deformation of the gel surface, which can be captured by the camera in GelSight. If the surface of the object is extremely slippery and the application of sufficient grasping force does not cause enough tangential deformation of the gel surface, the algorithm lacks the necessary information to continue tightening the gripper, leading to a failure of grasping.

It should be noted that if we utilize a sensor elastomer with lower stiffness, we anticipate that LeTac-MPC, PD, and MPC will demonstrate enhanced performance in grasping soft objects. They are likely to use less force while achieving more robust grasping, attributed to the greater deformation of the elastomer upon contact with objects. However, a potential issue arises: the excessive softness of the elastomer might cause the grasped object to shake in the hand, which is undesirable for many manipulation tasks. Therefore, selecting an elastomer with the appropriate stiffness is crucial.

Our proposed LeTac-MPC has several limitations that could be potential research directions for future work. 

1. First of all, our method cannot grasp very soft objects, such as various types of meat. Due to the much lower stiffness of these objects compared to the GelSight sensor elastomer, the tactile feedback is very subtle and even difficult for human eyes to recognize. To grasp such soft objects, it may be necessary to design new grippers that are more suitable for soft objects and develop sensing and control algorithms that are compatible with new grippers to solve this problem at the system level. 

2. Second, our algorithm applies to grasping objects with different physical properties, but does not involve specific manipulation operations. In the future, we can explore how to perform complex manipulation tasks on objects with different physical properties based on tactile feedback. 

3. In addition, our work only utilizes tactile feedback as input to the network. However, for more complex manipulation tasks, combining visual and tactile feedback can provide richer information. In such cases, the linear differentiable MPC layer may not have sufficient ability to represent task objectives. To design algorithms that can generalize to various manipulation tasks, incorporating nonlinear optimization layers with visual and tactile input could be a potential research direction. 

4. {Vision-based tactile sensors, like GelSight, provide high-dimensional tactile feedback. This high-dimensional feedback contains a wealth of contact information. However, processing such high-dimensional data can challenge the frame rate of the entire learning-based control loop, potentially limiting its performance in certain tasks. Tactile sensors based on electrical signals, known for their good sensitivity and wide bandwidth, offer significant advantages. Exploring learning-based reactive grasping that combines tactile sensors based on electrical signals and vision is a promising direction.} 

5. Finally, since the raw image of different GelSight sensors looks slightly different, a model trained with data from one sensor might perform less effectively when transferred to another sensor. However, the imaging principle of GelSight is the same and the images from different sensors follow certain patterns and have commonalities. Therefore, training a controller or policy on one sensor and generalizing it to other GelSights presents a promising future research direction.

\section{acknowledgements}
The authors thank Yipai Du for helpful discussions about tactile sensing. The authors thank Raghava Uppuluri for setting up the initial robot system. This work was partially supported by the National Science Foundation (NSF; No. 2423068). This article solely reflects the
opinions and conclusions of its authors and not of NSF.

\ifCLASSOPTIONcaptionsoff
  \newpage
\fi

\bibliographystyle{IEEEtran}
\bibliography{IEEEabrv,paperref}

\begin{IEEEbiography}[{\includegraphics[width=1in,height=1.25in,clip,keepaspectratio]{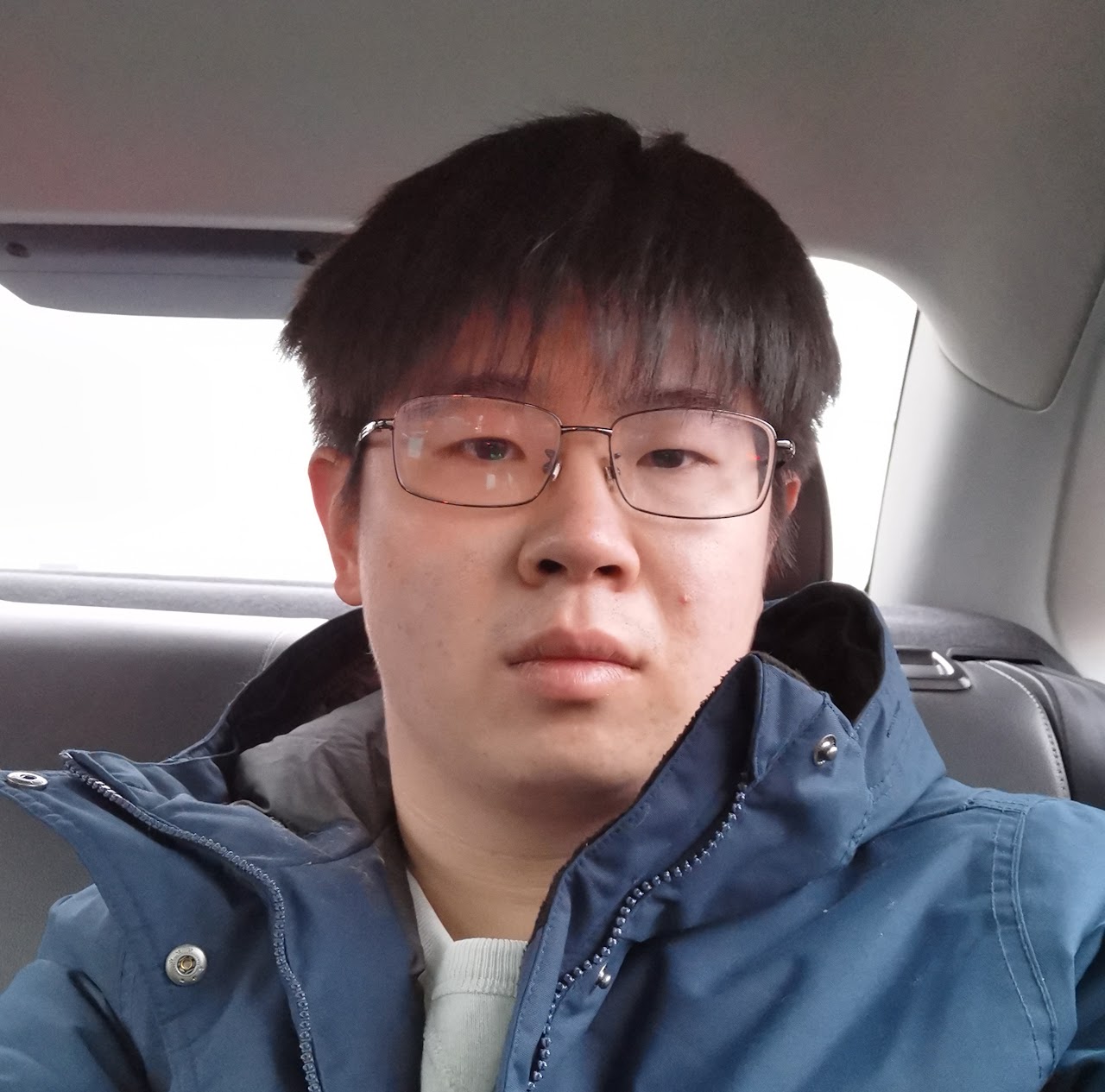}}]{Zhengtong Xu} received his Bachelor's degree in mechanical engineering from Huazhong University of Science and Technology, China. He is currently pursuing his Ph.D. at Purdue University. His research focus is on robot learning.
\end{IEEEbiography}
\vskip -2\baselineskip plus -1fil
\begin{IEEEbiography}[{\includegraphics[width=1in,height=1.25in,clip,keepaspectratio]{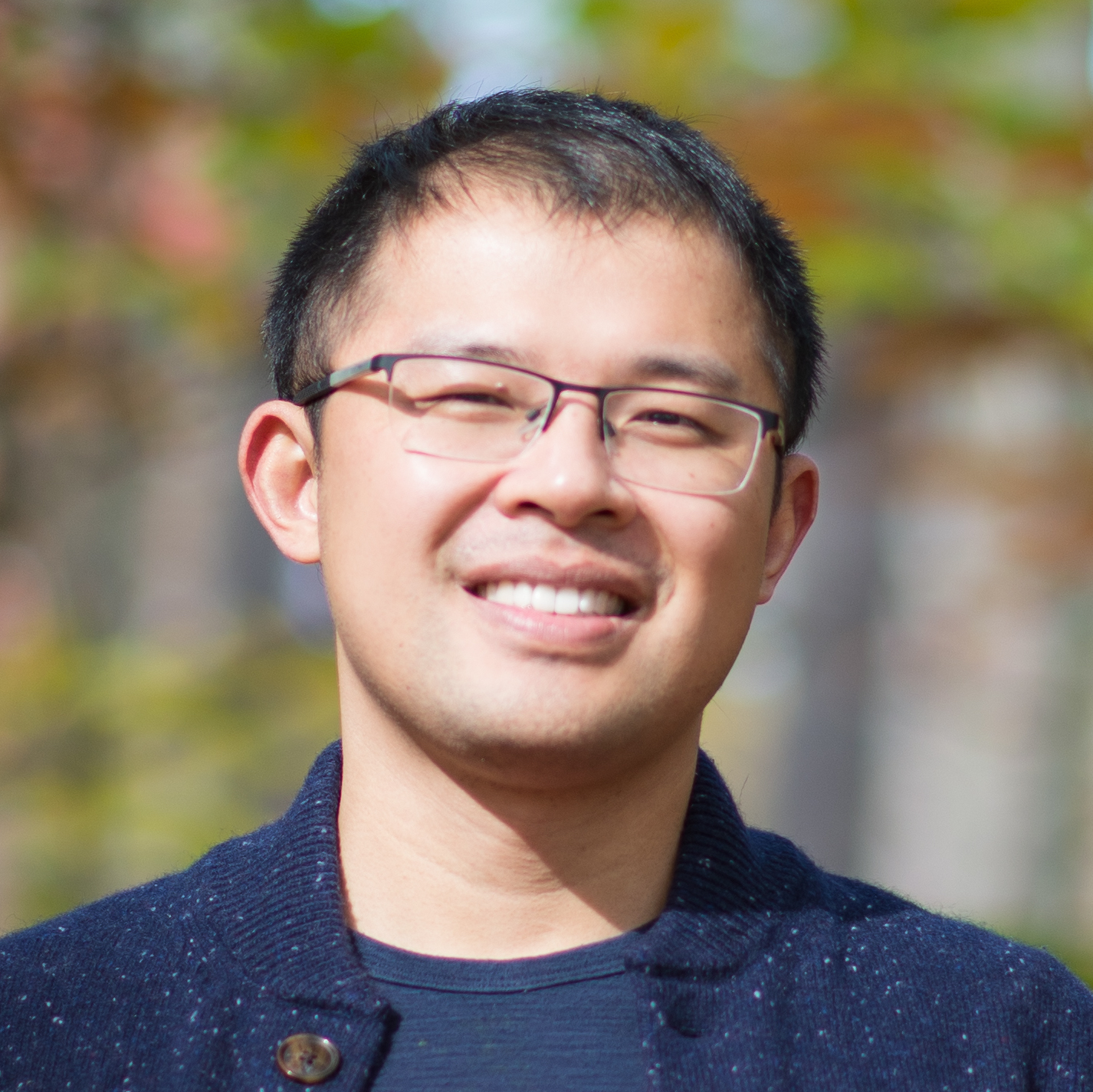}}]{Yu She} is an assistant professor at Purdue University School of Industrial Engineering. Prior to that, he was a postdoctoral researcher in the Computer Science and Artificial Intelligence Laboratory at MIT from 2018 to 2021. He earned his Ph.D. degree in the Department of Mechanical Engineering at the Ohio State University in 2018. His research, at the intersection of mechanical design, sensory perception, and dynamic control, explores human-safe collaborative robots, soft robotics, and robotic manipulation.
\end{IEEEbiography}

\end{document}